\def\eqref#1{equation~\ref{#1}}
\def\1{\bm{1}}
\DeclareMathAlphabet{\mathsfit}{\encodingdefault}{\sfdefault}{m}{sl}
\SetMathAlphabet{\mathsfit}{bold}{\encodingdefault}{\sfdefault}{bx}{n}
\newtheorem{theorem}{Theorem}[section]
\newtheorem{lemma}[theorem]{Lemma}
\newtheorem{proposition}[theorem]{Proposition}
\newtheorem{corollary}[theorem]{Corollary}
\newcommand{\figWithRightVLabel}[3][]{%
  % #1: optional includegraphics options (e.g., width=\linewidth)
  % #2: image filename
  % #3: vertical text on the right
  \noindent
  \begin{minipage}{0.95\linewidth}
    \includegraphics[#1]{#2}
  \end{minipage}\hfill
  \begin{minipage}{0.04\linewidth}
    \centering
    \rotatebox{90}{\footnotesize #3}
  \end{minipage}%
}
\title{RIZE: Adaptive Regularization for Imitation Learning}
\author{\name Adib Karimi\textsuperscript{\dag} \email adibkarimi23@aut.ac.ir \\
      \addr Department of Computer Engineering\\
      Amirkabir University of Technology
      \AND
      \name Mohammad Mehdi Ebadzadeh \email ebadzadeh@aut.ac.ir \\
      \addr Department of Computer Engineering\\
      Amirkabir University of Technology}
\begin{document}

\maketitle

\begingroup
\renewcommand{\thefootnote}{}  % No auto-marker
\footnotetext{\textsuperscript{\dag}Corresponding author.}
\endgroup

\begin{abstract}
We propose a novel Inverse Reinforcement Learning (IRL) method that mitigates the rigidity of fixed reward structures and the limited flexibility of implicit reward regularization. Building on the Maximum Entropy IRL framework, our approach incorporates a squared temporal-difference (TD) regularizer with adaptive targets that evolve dynamically during training, thereby imposing adaptive bounds on recovered rewards and promoting robust decision-making. To capture richer return information, we integrate distributional RL into the learning process. Empirically, our method achieves expert-level performance on complex MuJoCo and Adroit environments, surpassing baseline methods on the \texttt{Humanoid-v2} task with limited expert demonstrations. Extensive experiments and ablation studies further validate the effectiveness of the approach and provide insights into reward dynamics in imitation learning. Our source code is available at \url{https://github.com/adibka/RIZE}.
\end{abstract}

\section{Introduction}

Designing effective reward functions remains a fundamental challenge in Reinforcement Learning (RL). While dense, well-shaped rewards can facilitate learning, they often require laborious task-specific tuning and domain expertise, limiting their scalability \citep{Amodei2016, Peng2020}. To circumvent these limitations, researchers have explored alternative approaches, including sparse rewards upon task completion \citep{Silver2016}, learning from expert trajectories \citep{Schaal1996, Ng2000, Osa2018}, human preference-based reward modeling \citep{Christiano2017, 2021pebble, hejna2023inverse}, and intrinsically motivated RL \citep{Oudeyer2007, Schmidhuber2010, Colas2022}. Among these, Inverse Reinforcement Learning (IRL) \citep{Abbeel2004} offers a compelling alternative by inferring reward functions directly from expert demonstrations, bypassing manual reward engineering. IRL has driven breakthroughs in robotics \citep{Osa2018}, autonomous driving \citep{Knox2023}, and drug discovery \citep{Ai2024}.

A prominent framework in IRL is Maximum Entropy (MaxEnt) IRL \citep{Ziebart2010}, which underpins many state-of-the-art (SOTA) IL methods. Prior works have combined MaxEnt IRL with adversarial training \citep{Ho2016, Fu2018} to minimize divergences between agent and expert distributions. However, these adversarial methods often suffer from instability during training. To address this, recent research has introduced implicit reward regularization, which indirectly represents rewards via Q-values by inverting the Bellman equation. For instance, IQ-Learn \citep{Garg2021} unifies reward and policy representations using Q-functions with an \( L_2 \)-norm regularization on rewards, while LSIQ \citep{Al-Hafez2023} minimizes the chi-squared divergence between expert and mixture distributions, resulting in a squared temporal difference (TD) error objective analogous to SQIL \citep{Reddy2020}. Despite its effectiveness, this method has limitations: LSIQ assigns fixed targets for implicit rewards (e.g., +1 for expert samples and -1 for agent samples), which constrains flexibility by treating all tasks and state-action pairs uniformly, limiting performance and requiring additional gradient steps for convergence.

We propose an extension of implicit reward regularization under the MaxEnt IRL framework, introducing two key advancements. \textbf{Adaptive Targets:} We enhance prior TD-error regularization by introducing learnable targets \( \lambda^{\pi_E} \) and \( \lambda^{\pi} \) that dynamically adjust during training. These targets replace static constraints with context-sensitive reward alignment, preventing rewards from over-increasing or over-decreasing. Crucially, our theoretical analysis reveals that these adaptive bounds constrain implicit rewards to well-defined ranges \( \left[ -\frac{1}{2c} + \min\left\{ \lambda^{\pi_E}, \lambda^{\pi} \right\},\; \frac{1}{2c} + \max\left\{ \lambda^{\pi_E}, \lambda^{\pi} \right\} \right] \), where \(c\) is the regularization coefficient. Since implicit rewards derive from Q-values, this regularization indirectly stabilizes policy training. \textbf{Distributional RL Integration:} We incorporate return distributions \( Z^{\pi}(s,a) \) \citep{Bellemare2017} to capture richer uncertainty information in returns, while using their expectations for policy optimization. Though distributional RL has shown success in adversarial IRL \citep{Zhou2023}, it remains unexplored in non-adversarial settings. Our work bridges this gap, demonstrating its efficacy in MaxEnt IRL while preserving theoretical guarantees. Unifying these advances, our framework outperforms IL baselines on MuJoCo \citep{Todorov2012} and Adroit \citep{rajeswaran2018dexterousmanipulation} benchmarks. Notably, our approach shows clear benefits on complex tasks like \texttt{Humanoid-v2} and \texttt{Hammer-v1}, where adaptive targets and distributional learning improve training stability, as confirmed by our ablations.

Our contributions are threefold. First, we introduce adaptive targets for implicit reward regularization, enabling dynamic reward bounds that enhance stability and alignment during training. Second, we integrate return distributions into implicit reward frameworks, capturing richer uncertainty information while preserving theoretical consistency. Third, we empirically validate our approach through extensive experiments on MuJoCo and Adroit tasks, demonstrating superior performance in complex environments.

\section{Related Work}

Imitation learning (IL) and inverse reinforcement learning (IRL) \citep{Watson2023} are foundational paradigms for training agents to mimic expert behavior from demonstrations. Behavioral Cloning (BC) \citep{Pomerleau1991}, the simplest IL approach, treats imitation as a supervised learning problem by directly mapping states to expert actions. While computationally efficient, BC is prone to compounding errors \citep{Ross2011} due to covariate shift during deployment. The Maximum Entropy IRL framework \citep{Ziebart2010} addresses this limitation by probabilistically modeling expert behavior as reward maximization under an entropy regularization constraint, establishing a theoretical foundation for modern IRL methods. The advent of adversarial training marked a pivotal shift in IL methodologies. \citet{Ho2016} introduced Generative Adversarial Imitation Learning (GAIL), which formulates imitation learning as a generative adversarial game \citep{Goodfellow2014} where an agent learns a policy indistinguishable from the expert’s by minimizing the Jensen–Shannon divergence between their state–action distributions. This framework was generalized by \citet{Ghasemipour2019} in f-GAIL, which replaces the Jensen–Shannon divergence with arbitrary (f)-divergences to broaden applicability. Concurrently, \citet{Kostrikov2019} proposed Discriminator Actor–Critic (DAC), improving sample efficiency via off-policy updates and terminal-state reward modeling while mitigating reward bias. Most recently, \citet{chang2024adversarial} cast adversarial imitation as a boosting procedure: AILBoost maintains an ensemble of weighted policies and trains the discriminator against a weighted replay buffer approximating the ensemble’s occupancy, enabling fully off-policy training and reporting gains over DAC on DeepMind Control tasks \citep{tassa2018deepmindcontrolsuite}.

Recent advances have shifted toward methods that bypass explicit reward function estimation. \citet{Kostrikov2020} introduced ValueDICE, an offline IL method that leverages an inverse Bellman operator to avoid adversarial optimization. Similarly, \citet{Garg2021} developed IQ-Learn, which circumvents the challenges of MaxEnt IRL by optimizing implicit rewards derived directly from expert Q-values. A parallel research direction simplifies reward engineering by assigning fixed rewards to expert and agent samples. \citet{Reddy2020} pioneered this approach with Soft Q Imitation Learning (SQIL), which assigns binary rewards to transitions from expert and agent trajectories. Most recently, \citet{Al-Hafez2023} proposed Least Squares Inverse Q-Learning (LSIQ), enhancing regularization by minimizing chi-squared divergence between expert and mixture distributions while explicitly managing absorbing states through critic regularization. In the same spirit of avoiding adversarial training, Coherent Soft Imitation Learning (CSIL) \citep{Watson2023} inverts the soft policy update to derive an explicit shaped “coherent” reward from a behavior-cloned policy and then fine-tunes the policy with standard RL using online or offline data. Orthogonally, \citet{jain2025nonadversarial} introduce Successor Feature Matching, a non-adversarial IRL method that forgoes explicit reward learning by directly matching expert and learner successor features, and notably supports state-only demonstrations. Complementing these, \citet{wu2025diffusing} propose a diffusion-based framework that learns score functions on expert and learner states and optimizes a score-difference cost (a diffusion score divergence), offering a non-adversarial alternative that also works with state-only demos. Our work builds on IQ-Learn \citep{Garg2021} and LSIQ \citep{Al-Hafez2023} by applying a squared TD regularizer with adaptive targets and by employing an Implicit Quantile Network (IQN) critic \citep{Dabney2018}.

\section{Background}

\subsection{Preliminary}

We consider a Markov Decision Process (MDP) \citep{Puterman2014} to model policy learning in Reinforcement Learning (RL). The MDP framework is defined by the tuple \(\langle \mathcal{S}, \mathcal{A}, p_0, P, R, \gamma \rangle\), where \(\mathcal{S}\) denotes the state space, \(\mathcal{A}\) the action space, \(p_0\) the initial state distribution, \(P: \mathcal{S} \times \mathcal{A} \times \mathcal{S} \rightarrow [0, 1]\) the transition kernel with \(P(\cdot \mid s,a)\) specifying the likelihood of transitioning from state \(s\) given action \(a\), \(R: \mathcal{S} \times \mathcal{A} \rightarrow \mathbb{R}\) the reward function, and \(\gamma \in [0, 1]\) the discount factor which tempers future rewards. A stationary policy \(\pi \in \Pi\) is characterized as a mapping from states \(s \in \mathcal{S}\) to distributions over actions \(a \in \mathcal{A}\). The primary objective in RL \citep{Sutton2018} is to maximize the expected sum of discounted rewards, expressed as \(\mathbb{E}_{\pi}\left[\sum_{t=0}^{\infty} \gamma^t R(s_t, a_t)\right]\). Furthermore, the occupancy measure \(\rho_{\pi}(s, a)\) for a policy \(\pi \in \Pi\) is given by \((1 - \gamma) \pi(a \mid s) \sum_{t=0}^{\infty} \gamma^t P(s_t = s \mid \pi)\). The corresponding measure for an expert policy, \(\pi_E\), is similarly denoted by \(\rho_E\). In Imitation Learning (IL), the expert policy \(\pi_E\) is typically unknown, and only a finite set of expert demonstrations is available, rather than explicit reward feedback from the environment.

\subsection{Distributional Reinforcement Learning}

Maximum Entropy (MaxEnt) RL \citep{Haarnoja2018} focuses on addressing the stochastic nature of action selection by maximizing the entropy of the policy, while Distributional RL \citep{Bellemare2017} emphasizes capturing the inherent randomness in returns. Combining these perspectives, the distributional soft value function \(Z : \mathcal{S} \times \mathcal{A} \rightarrow \mathcal{Z}\) \citep{Ma2020} for a policy \(\pi \in \Pi\) encapsulates uncertainty in both rewards and actions, with \(\mathcal{Z}\) representing the space of return distributions. It is formally defined as: 
\begin{equation} \label{eq:z}
    Z(s,a) = \sum_{t=0}^{\infty} \gamma^t [R(s_{t}, a_{t}) + \alpha \mathcal{H}(\pi(\cdot \mid s_t))],
\end{equation}
where \(\mathcal{H}(\pi) = \mathbb{E}_{\pi}[-\log \pi(a \mid s)]\) denotes the entropy of the policy, and \(\alpha > 0\) balances entropy with reward.

The distributional soft Bellman operator \(\mathcal{B}_{D}^\pi : \mathcal{Z} \rightarrow \mathcal{Z}\) for a given policy \(\pi\) is introduced as \( (\mathcal{B}_{D}^\pi Z)(s, a) \overset{D}{=} R(s, a) + \gamma [Z(s', a') - \alpha \log \pi(a' \mid s')] \), where \(s' \sim P (\cdot \mid s, a)\), \(a' \sim \pi (\cdot \mid s')\), and  \(\overset{D}{=}\) signifies equality in distribution. Notably, this operator exhibits contraction properties under the p-Wasserstein metric, ensuring convergence to a unique fixed point, the distributional soft return function for the given policy.

A practical approach to approximating the return distribution \(Z\) involves modeling its quantile function \(F^{-1}_{Z}(\tau)\), evaluated at specific quantile levels \( \tau \in [0,1]\) \citep{Dabney2018a}. The quantile function is defined as \(F^{-1}_{Z}(\tau) = \inf \{ z \in \mathbb{R} : \tau \leq F_{Z}(z) \}\), where \(F_{Z}(z) = \mathbb{P}(Z \leq z)\) is the cumulative distribution function of \(Z\). For simplicity, we denote the quantile-based representation as \(Z_{\tau}(s,a) := F^{-1}_{Z}(\tau)\). To discretize this representation, we define a sequence of quantile levels, denoted as \(\{\tau_i\}_{i=0,...,N-1}\), where \(0 = \tau_0 < ... < \tau_{N-1} = 1\). These quantiles partition the unit interval into \(N\) fractions. For uniformly sampled quantiles \(\tau \sim U(0,1)\), \(Z_{\tau}(s,a)\) denotes the \(\tau\)-quantile (scalar) of the return distribution.

\subsection{Inverse Reinforcement Learning}

Given expert trajectory data, Maximum Entropy (MaxEnt) Inverse RL \citep{Ziebart2010} aims to infer a reward function \( R(s,a) \) from the family \( \mathcal{R} = \mathbb{R}^{S \times A} \). Instead of assuming a deterministic expert policy, this method optimizes for stochastic policies \( \pi \in \Pi \) that maximize \( R \) while matching expert behavior. GAIL \citep{Ho2016} extends this framework by introducing a convex reward regularizer \( \psi : \mathbb{R}^{S \times A} \rightarrow \bar{\mathbb{R}} \), leading to the adversarial objective:
\begin{equation} \label{eq:maxent_irl}
\begin{aligned}
\max_{R \in \mathcal{R}} \min_{\pi \in \Pi} L(\pi, R) = \mathbb{E}_{\rho_E} [R(s, a)] - \mathbb{E}_{\rho_\pi} [R(s, a)] - \mathcal{H}(\pi) - \psi(R) \, .
\end{aligned}
\end{equation}

IQ-Learn \citep{Garg2021} departs from adversarial training by implicitly representing rewards through Q-functions \( Q \in \Omega \) \citep{Piot2014}. It leverages the inverse soft Bellman operator \( \mathcal{T}^\pi \), defined as:
\begin{equation}
\begin{aligned}
(\mathcal{T}^{\pi} Q)(s, a) &= Q(s, a) - \gamma \mathbb{E}_{s' \sim P (\cdot \mid s, a), a' \sim \pi(\cdot \mid s')} [Q(s', a') - \alpha \log \pi(a'|s')].
\end{aligned}
\end{equation}
For a fixed policy \( \pi \), \( \mathcal{T}^\pi \) is bijective, ensuring a one-to-one correspondence between \( Q \)-values and rewards: \( \mathcal{T}^{\pi} Q = R \) and \( Q = (\mathcal{T}^{\pi})^{-1}R \). This allows reframing the MaxEnt IRL objective (\ref{eq:maxent_irl}) in Q-policy space as \( \max_{Q \in \Omega} \min_{\pi \in \Pi} \mathcal{J}(\pi, Q) \). IQ-Learn simplifies the problem by defining the implicit reward \( R_Q(s,a) = \mathcal{T}^{\pi}Q(s,a) \) and applying an L2 regularizer \( \psi(R_Q) \). The final objective becomes:
\begin{equation} \label{eq:iqlearn_objective}
\begin{aligned}
\max_{Q \in \Omega} \min_{\pi \in \Pi} \mathcal{J}(\pi, Q) &= \; \mathbb{E}_{\rho_E} [R_{Q}(s,a)] - \mathbb{E}_{\rho_\pi} [R_{Q}(s,a)] - \alpha \mathcal{H}(\pi) \\
& \qquad - c \Big[ \mathbb{E}_{\rho_E} [R_{Q}(s,a)^2] + \mathbb{E}_{\rho_{\pi}} [R_{Q}(s,a)^2] \Big] \, .
\end{aligned}
\end{equation}

\section{Methodology}

This section introduces a framework that integrates Distributional Reinforcement Learning with Inverse RL. We first show how return distributions can replace point-estimate critics. We then propose an adaptive regularization technique for implicit rewards and analyze its properties. Finally, we derive RIZE, which combines distributional critics with bounded-reward imitation learning.

\subsection{Distributional Value Integration}

Our approach departs from traditional imitation learning by explicitly using return distributions as critics within an actor-critic framework \citep{Zhou2023}. We argue that learning the soft return distribution \(Z(s,a)\) in Equation~(\ref{eq:z}), rather than relying solely on point estimates like \(Q(s,a)\), enhances decision-making by capturing uncertainty in complex environments. This view is consistent with recent neuroscience findings suggesting that decision-making in the prefrontal cortex relies on learning distributions over outcomes rather than only their expectations \citep{Muller2024}.

Moreover, access to the full return distribution enables the computation of statistical moments—most notably the expectation—which we optimize both policy and critic using the expectation of \(Z\) \citep{Bellemare2017, Dabney2018a, Dabney2018}, yielding a more robust learning signal while remaining compatible with IQ-Learn.

We compute \(Q\) as the expectation of the soft return distribution:
\begin{equation}
Q(s, a) = \sum_{i=0}^{N-1} (\tau_{i+1} - \tau_i)\, Z_{\tau_i}(s, a) \, ,
\label{eq:expec-z}
\end{equation}
where \( \{\tau_i\} \) are quantile fractions and \( Z_{\tau_i}(s, a) \) denotes the corresponding quantile values (see Lemma~\ref{lem:expected_z}).

\subsection{Implicit Reward Regularization}

In this section, we propose a regularizer for inverse RL that refines existing implicit reward formulations \citep{Garg2021}. The implicit reward is defined as:
\begin{equation} \label{eq:implicit_reward}
R_{Q}(s,a) = Q(s, a) - \gamma \mathbb{E}_{P, \pi} \left[ Q(s',a') - \alpha \log \pi(a'|s') \right].
\end{equation}

Previous works typically regularize implicit rewards either using \( L_2 \)-norms \citep{Garg2021} or by treating them as squared-TD errors between rewards and fixed targets \citep{Reddy2020, Al-Hafez2023}. While we adopt a similar squared-TD setting, we introduce adaptive targets \( \lambda^{\pi_E} \) (for the expert \( \pi_E \)) and \( \lambda^{\pi} \) (for the imitation policy \( \pi \)) to construct our convex regularizer \( \Gamma: \mathbb{R}^{S \times A} \rightarrow \bar{\mathbb{R}} \):
\begin{equation} \label{eq:regularizer}
\Gamma(R_{Q}, \lambda) = \mathbb{E}_{\rho_{E}} \left[(R_{Q}(s,a) - \lambda^{\pi_E})^2\right] + \mathbb{E}_{\rho_{\pi}} \left[(R_{Q}(s,a) - \lambda^{\pi})^2\right].
\end{equation}

These targets self-update through a feedback loop where reward estimates continuously adapt to match moving targets:
\begin{equation}
\underset{\lambda^{\pi_E}}{\min} \: \mathbb{E}_{\rho_{E}} \left[ \left( R_{Q}(s,a) - \lambda^{\pi_E} \right)^2 \right]
\, ,\qquad 
\underset{\lambda^{\pi}}{\min} \: \mathbb{E}_{\rho_{\pi}} \left[ \left( R_{Q}(s,a) - \lambda^{\pi} \right)^2 \right] \, .
\label{eq:lambda_objective}
\end{equation}

Substituting \(\Gamma(R_Q,\lambda)\) for the \(L_2\) term in Equation~(\ref{eq:iqlearn_objective}) and using Equation~(\ref{eq:expec-z}) to compute \(Q\), we obtain:
\begin{equation}
\begin{aligned} 
\mathcal{L}(\pi, Q) &= \mathbb{E}_{\rho_E} [R_{Q}(s,a)] - \mathbb{E}_{\rho_{\pi}} [R_{Q}(s,a)] - \alpha \mathcal{H}(\pi) \\
& \qquad - c \Big[ \mathbb{E}_{\rho_{E}} \left[ (R_{Q}(s,a) - \lambda^{\pi_E})^2 \right] + \mathbb{E}_{\rho_{\pi}} \left[ (R_{Q}(s,a) - \lambda^{\pi})^2 \right] \Big] \, ,
\label{eq:critic_objective}
\end{aligned}
\end{equation}  
where \( c \) is the regularization coefficient.

As in IQ-Learn and LSIQ, our method seeks behavior indistinguishable from expert demonstrations in a non-adversarial implicit-reward setting. Prior work analyzes optimal implicit rewards, drawing on Max–Min analyses from GANs \citep{Al-Hafez2023, Goodfellow2014}. Because we bind rewards to adaptive targets via \(\Gamma(R_Q,\lambda)\) (Equation~(\ref{eq:regularizer})), we carry out an analogous analysis, stated below.

\begin{proposition}
\label{prop:optimal-reward}
Let \( R_Q(s,a) = (\mathcal{T}^{\pi} Q)(s, a) \) denote the implicit reward derived from point-estimate Q-values, where \( Q(s,a) = \mathbb{E}[Z(s,a)] \). Let \( \rho_E(s,a) \) and \( \rho_{\pi}(s,a) \) denote occupancy measures under \( \pi_E \) and \( \pi \), respectively. For fixed \( \pi \), the optimal TD-regularized reward satisfies:
\begin{equation} \label{eq:optimal-reward}
\begin{aligned}
R_Q^*(s,a) = \, \frac{\rho_E(s,a) - \rho_{\pi}(s,a)}{(2c) \, (\rho_E(s,a) + \rho_{\pi}(s,a))}  + \frac{\rho_E(s,a) \lambda^{\pi_E} + \rho_{\pi}(s,a) \lambda^{\pi}}{\rho_E(s,a) + \rho_{\pi}(s,a)} \, .
\end{aligned}
\end{equation}
\end{proposition}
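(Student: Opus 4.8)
The plan is to exploit that, for a fixed policy $\pi$, the inverse soft Bellman operator $\mathcal{T}^{\pi}$ is a bijection on $\mathbb{R}^{S\times A}$ (as recalled in the background), so that maximizing $\mathcal{L}(\pi,Q)$ over $Q\in\Omega$ is equivalent to maximizing the same objective over the implicit reward $R \equiv R_Q = \mathcal{T}^{\pi}Q$, now regarded as a free vector in $\mathbb{R}^{S\times A}$. First I would substitute $R$ for $R_Q$ in Equation~(\ref{eq:critic_objective}) and discard the term $-\alpha\mathcal{H}(\pi)$, which is constant in $Q$. Writing the expectations as sums against the occupancy measures, the objective becomes
\[
\sum_{(s,a)}\Big[\big(\rho_E(s,a)-\rho_{\pi}(s,a)\big)R(s,a) - c\,\rho_E(s,a)\big(R(s,a)-\lambda^{\pi_E}\big)^2 - c\,\rho_{\pi}(s,a)\big(R(s,a)-\lambda^{\pi}\big)^2\Big] + \mathrm{const},
\]
which is \emph{separable}: it is a sum of one scalar function $g_{s,a}$ per state--action pair, each depending only on the single coordinate $R(s,a)$.

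Next I would observe that each $g_{s,a}$ is a quadratic in $x = R(s,a)$ with leading coefficient $-c\big(\rho_E(s,a)+\rho_{\pi}(s,a)\big)\le 0$, strictly negative on the support where $\rho_E+\rho_{\pi}>0$; hence $g_{s,a}$ is strictly concave there and admits a unique maximizer, characterized by the first-order condition $g_{s,a}'(x)=0$, i.e.
\[
\big(\rho_E(s,a)-\rho_{\pi}(s,a)\big) - 2c\Big[\rho_E(s,a)\big(x-\lambda^{\pi_E}\big) + \rho_{\pi}(s,a)\big(x-\lambda^{\pi}\big)\Big] = 0.
\]
Solving this linear equation for $x$ — collecting the $x$-terms as $2c(\rho_E+\rho_{\pi})x$ and moving the target-weighted terms to the right-hand side — yields exactly Equation~(\ref{eq:optimal-reward}). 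Since the maximization decouples coordinatewise and each coordinate problem is strictly concave, this critical point is the global maximizer $R_Q^*$, and bijectivity of $\mathcal{T}^{\pi}$ guarantees it is realized by a (unique) $Q$.

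I do not anticipate a deep obstacle; the computation is essentially bookkeeping. The step needing the most care is the reduction itself: I must justify that optimizing over $Q$ is equivalent to an unconstrained, separable optimization over $R$, which uses bijectivity of $\mathcal{T}^{\pi}$ together with the fact that $\mathbb{E}_{\rho_E}[R_Q]$, $\mathbb{E}_{\rho_{\pi}}[R_Q]$, and the two quadratic penalties are all linear/quadratic functionals that split over $(s,a)$. A secondary caveat is degeneracy: on pairs with $\rho_E(s,a)+\rho_{\pi}(s,a)=0$ the reward is unconstrained by the objective, so the statement is to be read on the support of $\rho_E+\rho_{\pi}$; and $\lambda^{\pi_E},\lambda^{\pi}$ are held fixed throughout this inner maximization (they are optimized separately via Equation~(\ref{eq:lambda_objective})). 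With these points settled, the first-order computation closes the proof.
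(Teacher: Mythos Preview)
Your proposal is correct and follows essentially the same approach as the paper: differentiate the objective in Equation~(\ref{eq:critic_objective}) with respect to $R_Q(s,a)$ for fixed $\pi$, set the derivative to zero, and solve the resulting linear equation. The paper's proof is a terse two-line version of exactly this computation; your additional care about bijectivity of $\mathcal{T}^{\pi}$, separability, strict concavity, and the degenerate case $\rho_E+\rho_{\pi}=0$ are justified refinements that the paper leaves implicit.
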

\begin{proof}
Differentiating \(\mathcal{L}(\pi, Q)\) in Equation~(\ref{eq:critic_objective}) with respect to \(R_Q(s,a)\) (holding \(\pi\) fixed) and setting the result to zero yields
\begin{equation}
0 = \rho_E - \rho_{\pi} - 2c\big[ \rho_E (R_Q - \lambda^{\pi_E}) + \rho_{\pi} (R_Q - \lambda^{\pi}) \big] \, ,
\end{equation}
from which Equation~(\ref{eq:optimal-reward}) follows.
\end{proof}

\begin{corollary}
\label{cor:optimal-reward-bounds}
The optimal implicit reward satisfies:  
\begin{equation}  \label{eq:bounds-optimal-reward}
R_Q^*(s,a) \in \left[ -\frac{1}{2c} + \lambda_{\min},\; \frac{1}{2c} + \lambda_{\max} \right] \, , 
\end{equation}  
where \( \lambda_{\min} := \min\left\{ \lambda^{\pi_E}, \lambda^{\pi} \right\} \) and \( \lambda_{\max} := \max\left\{ \lambda^{\pi_E}, \lambda^{\pi} \right\} \). The coefficient \(c\) and adaptive targets bound rewards within a well-defined range.
\end{corollary}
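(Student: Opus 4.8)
The plan is to read the bound directly off the closed form for $R_Q^*(s,a)$ given in Proposition~\ref{prop:optimal-reward}, by splitting it into two additive pieces and bounding each one separately. Concretely, assuming $\rho_E(s,a) + \rho_{\pi}(s,a) > 0$, I would write
\begin{equation}
R_Q^*(s,a) = \underbrace{\frac{\rho_E(s,a) - \rho_{\pi}(s,a)}{2c\,\big(\rho_E(s,a) + \rho_{\pi}(s,a)\big)}}_{(\mathrm{I})} \;+\; \underbrace{\frac{\rho_E(s,a)\,\lambda^{\pi_E} + \rho_{\pi}(s,a)\,\lambda^{\pi}}{\rho_E(s,a) + \rho_{\pi}(s,a)}}_{(\mathrm{II})} \, .
\end{equation}
The case $\rho_E(s,a) = \rho_{\pi}(s,a) = 0$ is not pinned down by the first-order condition, so the statement is understood to hold on the support of $\rho_E + \rho_{\pi}$ (or extended there by continuity); I would flag this in a remark.

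First I would bound term $(\mathrm{I})$: since occupancy measures are nonnegative, $\rho_E(s,a),\rho_{\pi}(s,a)\ge 0$, hence $|\rho_E(s,a) - \rho_{\pi}(s,a)| \le \rho_E(s,a) + \rho_{\pi}(s,a)$, which gives $(\mathrm{I}) \in \left[-\tfrac{1}{2c},\, \tfrac{1}{2c}\right]$, with the endpoints attained exactly when one of the two measures vanishes at $(s,a)$. Next I would bound term $(\mathrm{II})$: setting $w_E := \rho_E(s,a)/(\rho_E(s,a)+\rho_{\pi}(s,a))$ and $w_{\pi} := \rho_{\pi}(s,a)/(\rho_E(s,a)+\rho_{\pi}(s,a))$, we have $w_E, w_{\pi} \ge 0$ and $w_E + w_{\pi} = 1$, so $(\mathrm{II}) = w_E\,\lambda^{\pi_E} + w_{\pi}\,\lambda^{\pi}$ is a convex combination of $\lambda^{\pi_E}$ and $\lambda^{\pi}$ and therefore lies in $[\lambda_{\min},\, \lambda_{\max}]$.

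Finally I would add the two interval estimates: $R_Q^*(s,a) = (\mathrm{I}) + (\mathrm{II}) \in \left[-\tfrac{1}{2c} + \lambda_{\min},\; \tfrac{1}{2c} + \lambda_{\max}\right]$, which is exactly Equation~(\ref{eq:bounds-optimal-reward}). I do not expect any genuine obstacle here: the result is an immediate consequence of the nonnegativity of occupancy measures together with the convex-combination structure already exposed by Proposition~\ref{prop:optimal-reward}. The only point requiring a word of care is the degenerate zero-measure case noted above, and possibly a remark that the bounds are tight in the sense that each endpoint is approached as $\rho_{\pi}(s,a)/\rho_E(s,a)\to 0$ or $\infty$ with $\lambda^{\pi_E},\lambda^{\pi}$ at their extreme values.
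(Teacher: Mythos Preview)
Your proposal is correct and matches the paper's own proof essentially line for line: the paper likewise splits $R_Q^*(s,a)$ into the two summands of Proposition~\ref{prop:optimal-reward}, bounds the first in $[-\tfrac{1}{2c},\tfrac{1}{2c}]$ via nonnegativity of $\rho_E,\rho_\pi$, observes the second is a convex combination of $\lambda^{\pi_E},\lambda^{\pi}$ and hence lies in $[\lambda_{\min},\lambda_{\max}]$, and then adds the intervals. Your added remarks on the degenerate zero-measure case and on tightness are extra care that the paper omits.
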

\begin{proof}  
Considering the optimal reward Equation~(\ref{eq:optimal-reward}), the first term lies in \( \left[ -\frac{1}{2c},\; \frac{1}{2c} \right] \) since \( \rho_E, \rho_\pi \geq 0 \) (achieved when either \( \rho_\pi \to 0 \) or \( \rho_E \to 0 \)). The second term is a convex combination of \( \lambda^{\pi_E} \) and \( \lambda^{\pi} \), thus in \( \left[ \lambda_{\min},\; \lambda_{\max} \right] \). Combining intervals yields the result. In practice, we use \(\lambda \in [5, 10]\) and \(c \in [0.1, 0.5]\), which we found to promote stable training (see Appendix~\ref{subsec:hyperparameter-tuning}). 
\end{proof}

Moreover, when the occupancy measures match, the optimal reward simplifies significantly, as stated in Corollary~\ref{cor:optimal-reward-convergence}:
\begin{equation}
	R_Q^*(s,a) = \lambda^{\pi_E} = \lambda^\pi \, .
\end{equation}

Recalling that we represent rewards through Q-values, the boundedness of the optimal reward ensures that critic updates remain constrained and—because the policy directly depends on these critic values—promotes stable policy optimization. However, as previously noted \citep{Al-Hafez2023, Viano2022}, the convergence guarantee originally stated for IQ-Learn \citep{Garg2021} does \emph{not} extend to the \(\chi^2\)-regularizer used in practice. Consequently, a formal proof for the resulting alternating SAC updates remains an open question and is left for future work.

\subsection{Practical Algorithm}

\noindent
\begin{minipage}[t]{0.50\linewidth}
\vspace{0pt} % align tops
We now introduce \textbf{RIZE}: Adaptive Regularization for Imitation Learning (Algorithm~\ref{algo:rize}). We approximate \(Z\) and \(\pi\) with neural networks, optimizing via \(Q = \mathbb{E}[Z]\). Following Distributional SAC \citep{Ma2020}, we employ target policies for next-action sampling and a double-critic architecture with target networks for stability. Lower learning rates proved critical for robust policy updates, with a four-layer MLP policy network necessary for complex tasks.

Targets \(\lambda^{\pi_E}\) and \(\lambda^{\pi}\) are optimized to match expected rewards. We initialize \(\lambda^{\pi_E}\) higher than \(\lambda^{\pi}\) and update \(\lambda^{\pi}\) with lower learning rates due to its sensitivity.
\end{minipage}
\hfill
\begin{minipage}[t]{0.48\linewidth}
\vspace{0pt} % align tops
\vspace*{-\baselineskip}
\begin{algorithm}[H]
\caption{RIZE}
\label{algo:rize}
\begin{algorithmic}[1]
\State \textbf{Initialize} $Z_{\phi}$, $\pi_{\theta}$, \(\lambda^{\pi_E}\), and \(\lambda^{\pi}\)
\For{step $t$ in $\{1, \dots, N\}$}
    \State \textbf{Calculate} \(Q(s, a) = \mathbb{E}[Z_{\phi}(s, a)]\) using Eq.~\ref{eq:expec-z}
    \State \textbf{Update} \(Z_{\phi}\) using Eq.~\ref{eq:critic_objective}
    \State \(
    \phi_{t+1} \leftarrow \phi_{t} - \beta_Z \nabla_{\phi} [-\mathcal{L}(\phi)]
    \)
    \State \textbf{Update} \(\pi_{\theta}\) (like SAC)
    \State \(
    \begin{aligned}
        \theta_{t+1} \leftarrow \theta_{t} + \beta_{\pi} \nabla_{\theta} \mathbb{E}_{\substack{s \sim \mathcal{D}, \\ a \sim \pi_{\theta}(\cdot|s)}} [& \min_{k=1,2} Q_k(s, a) \\
        & - \alpha \log \pi_{\theta}(a|s)]
    \end{aligned}
    \)
    \State \textbf{Update} \(\lambda^{\pi}\) and \(\lambda^{\pi_E}\) using Eq.~\ref{eq:lambda_objective}
    \State \(
    \lambda^{\pi}_{t+1} \leftarrow \lambda^{\pi}_{t} - \beta_{\lambda^{\pi}} \nabla_{\lambda^{\pi}} \Gamma(R_Q, \lambda)
    \)
    \State \(
    \lambda^{\pi_E}_{t+1} \leftarrow \lambda^{\pi_E}_{t} - \beta_{\lambda^{\pi_E}} \nabla_{\lambda^{\pi_E}} \Gamma(R_Q, \lambda)
    \)
\EndFor
\end{algorithmic}
\end{algorithm}
\end{minipage}

\begin{figure}[t]
  \centering
  
  \figWithRightVLabel[width=0.95\linewidth]{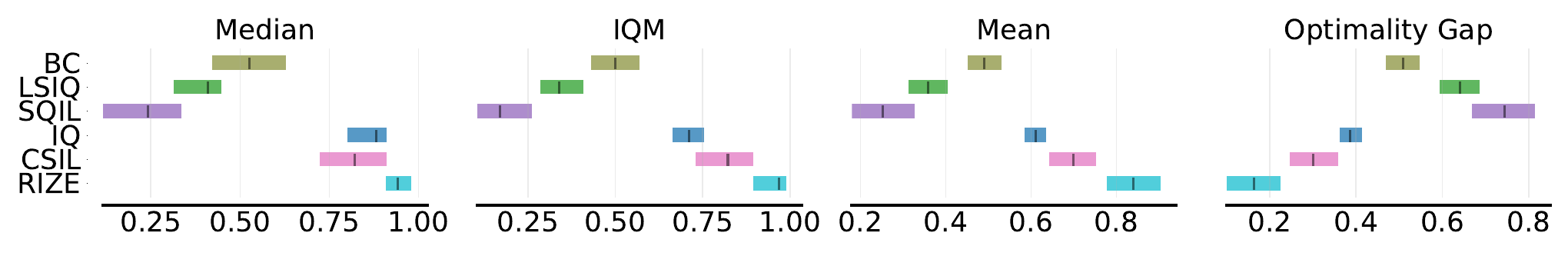}{Demo 3}
  \vspace{0.6em}
  
  \figWithRightVLabel[width=0.95\linewidth]{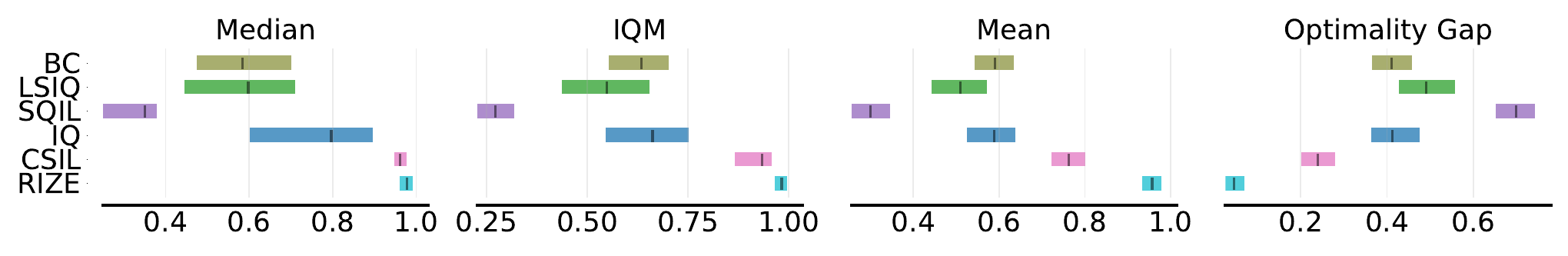}{Demo 10}
  
  \caption{\textbf{RLiable}~\citep{agarwal2021deep} plots for \textsc{RIZE} vs.\ BC, LSIQ, SQIL, CSIL, and IQ-Learn on six MuJoCo/Adroit tasks. For each setting (3 demos; 10 demos), we report aggregate \emph{Median}, \emph{IQM}, \emph{Mean}, and \emph{Optimality Gap} with 95\% confidence intervals computed via percentile bootstrap stratified over tasks and five seeds. Scores are normalized to expert performance. Higher is better for Median, IQM, and Mean; lower is better for Optimality Gap.}
  \label{fig:rliable-intervals}
\end{figure}

\section{Experiments}

We study continuous-control imitation learning from state--action expert samples, evaluating our algorithm on five MuJoCo \citep{Todorov2012} benchmarks (\texttt{HalfCheetah-v2, Walker2d-v2, Ant-v2, Humanoid-v2, Hopper-v2}) and one Adroit Hand task (\texttt{Hammer-v1}). We compare against state-of-the-art baselines IQ-Learn \citep{Garg2021}, LSIQ \citep{Al-Hafez2023}, SQIL \citep{Reddy2020}, CSIL \citep{Watson2023} and Behavior Cloning (BC) \citep{Pomerleau1991}. All experiments use five random seeds \citep{Henderson2018}. We assess each method with three and ten expert trajectories. We report mean $\pm$ 95\% confidence intervals (CI) across five seeds. Episode returns are normalized by expert performance. For implementation details, additional experimental results, and visualizations, see Appendix~\ref{appx:implementation-details} and \ref{appx:additional-experiments}.
\begin{figure*}[t]
    \centering
    \includegraphics[width=0.95\textwidth]{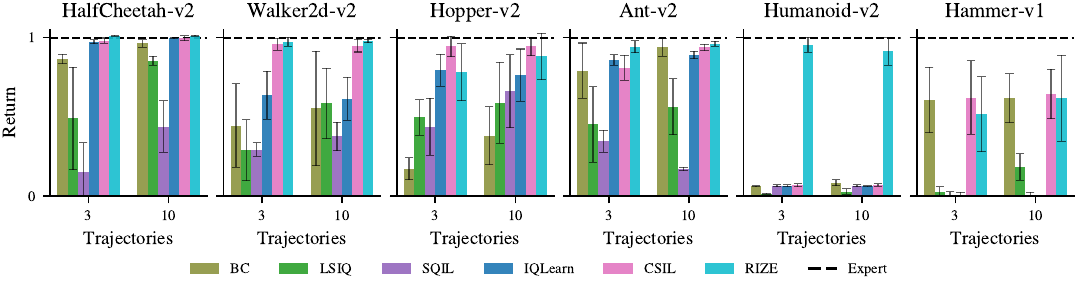}
    \caption{Normalized returns on MuJoCo and Adroit tasks for \textsc{RIZE} and baselines. We first compute, per seed, the average episodic return over the final third of training steps; bars show the mean across five seeds and error bars denote the 95\% confidence interval. Returns are normalized to expert performance and reported for both 3 and 10 expert demonstrations.}
    \label{fig:bar}
\end{figure*}

\paragraph{Main Results.}
Figure~\ref{fig:rliable-intervals} reports aggregate metrics computed with \textbf{RLiable}~\citep{agarwal2021deep}, which provides statistically principled evaluation via Median, Interquartile Mean (IQM), Mean, and Optimality Gap with stratified bootstrap confidence intervals; we adopt this protocol throughout, consistent with recent IL practice (e.g., AILBoost \citep{chang2024adversarial}; SFM \citep{jain2025nonadversarial}). Across both 3- and 10-demonstration settings, \textsc{RIZE} achieves higher Median, IQM, and Mean and a lower Optimality Gap than the baselines; the only competitive methods overall are CSIL and IQ-Learn. Increasing the number of expert trajectories from 3 to 10 consistently improves \textsc{RIZE} on all RLiable aggregates. Task-wise, \textsc{RIZE} is the only method that solves \texttt{Humanoid-v2} (all baselines fail), while on \texttt{Hammer-v1} \textsc{RIZE}, CSIL, and BC show higher returns than the remaining methods. In terms of sample efficiency, CSIL often attains strong returns early—benefiting from behavior-cloning initialization—but when BC is ineffective (e.g., \texttt{Humanoid-v2}), CSIL underperforms whereas \textsc{RIZE} ultimately succeeds. These trends are corroborated by the bar plot in Figure~\ref{fig:bar} (average of the final third of training) and the learning curves in Figure~\ref{fig:main_curves_demo10}. For learning curves with three demonstrations, see Figure~\ref{fig:main_curves_demo3}.
\begin{figure*}[t]
	\centering
	\includegraphics[width=0.8\linewidth]{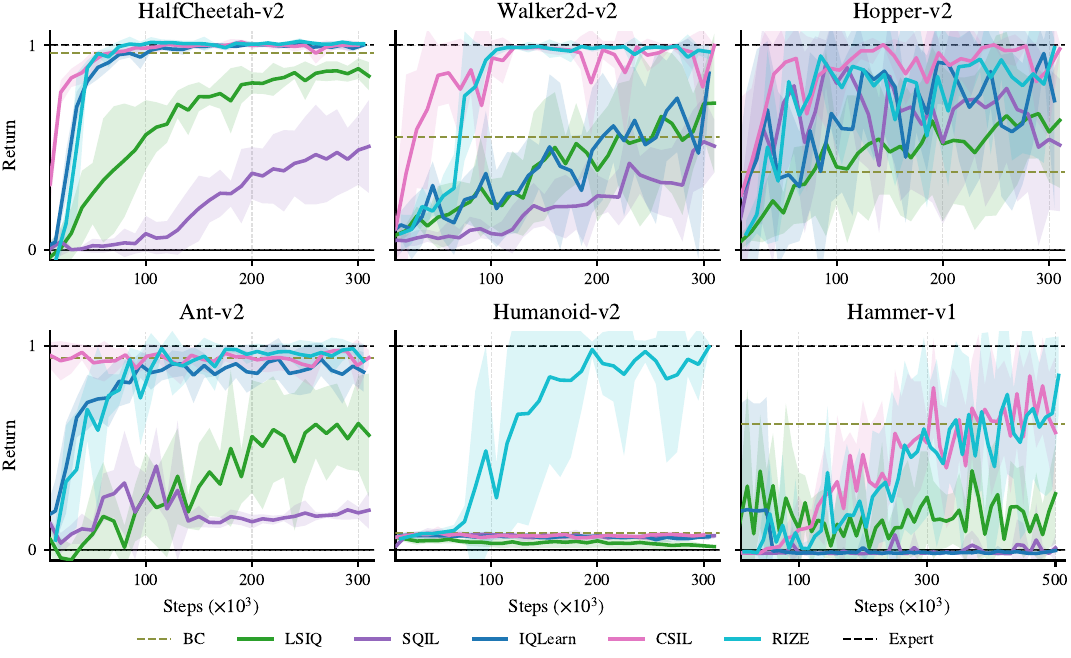}
    \caption{Learning curves on MuJoCo and Adroit tasks with 10 expert demonstrations. Lines show the mean normalized return across five seeds; shaded regions denote 95\% confidence intervals.}
	\label{fig:main_curves_demo10}
\end{figure*}

\paragraph{Recovered Rewards.}
In this section, we assess whether our choice of regularizer \(\Gamma(R_{Q}, \lambda)\) in Equation~(\ref{eq:regularizer}) with adaptive targets can effectively bound the implicit rewards. By Corollary~\ref{cor:optimal-reward-bounds} together with Equation~(\ref{eq:bounds-optimal-reward}), the optimal implicit reward is confined to the interval \( \left[ -\frac{1}{2c} + \lambda_{\min},\; \frac{1}{2c} + \lambda_{\max} \right] \). Since expert rewards are supposed to be higher than those of the learner during training (based on IRL objective loss~\ref{eq:iqlearn_objective}), and the adaptive targets track these levels, we empirically observe $\lambda^{\pi_E} \ge \lambda^{\pi}$; hence $\lambda_{\max} = \lambda^{\pi_E}$ and $\lambda_{\min} = \lambda^{\pi}$. In practice, we should see rewards confined to \( \left[ -\frac{1}{2c} + \lambda^{\pi},\; \frac{1}{2c} + \lambda^{\pi_E} \right] \).

Figure~\ref{fig:rewards_lambda_bounds_demo10} depicts the recovered reward trajectories together with the theoretical upper and lower bounds. Across tasks, the curves remain within the predicted band, indicating that our regularizer with adaptive targets effectively bounds both expert and policy rewards. We also observe that the position and width of the band vary with the task and data regime, reflecting the flexibility of the adaptive targets to adjust to different dynamics. In contrast, \emph{fixed} (non-adaptive) targets lack this ability, leading either to overly loose bounds or to target--reward mismatch. Overall, these results support the role of \(\Gamma(R_{Q}, \lambda)\) and the adaptive targets in producing stable, well-calibrated implicit rewards that align with Corollary~\ref{cor:optimal-reward-bounds}. For the 3-demonstration setting, Figure~\ref{fig:rewards_lambda_bounds_demo3} shows reward trajectories with their theoretical bounds; for cross-method comparisons of reward trajectories, see Figures~\ref{fig:implicit_rewards_mujoco_demo3}, \ref{fig:implicit_rewards_mujoco_demo10}, and \ref{fig:implicit_rewards_hammer}.
\begin{figure}[t]
    \centering
    \includegraphics[width=0.8\textwidth]{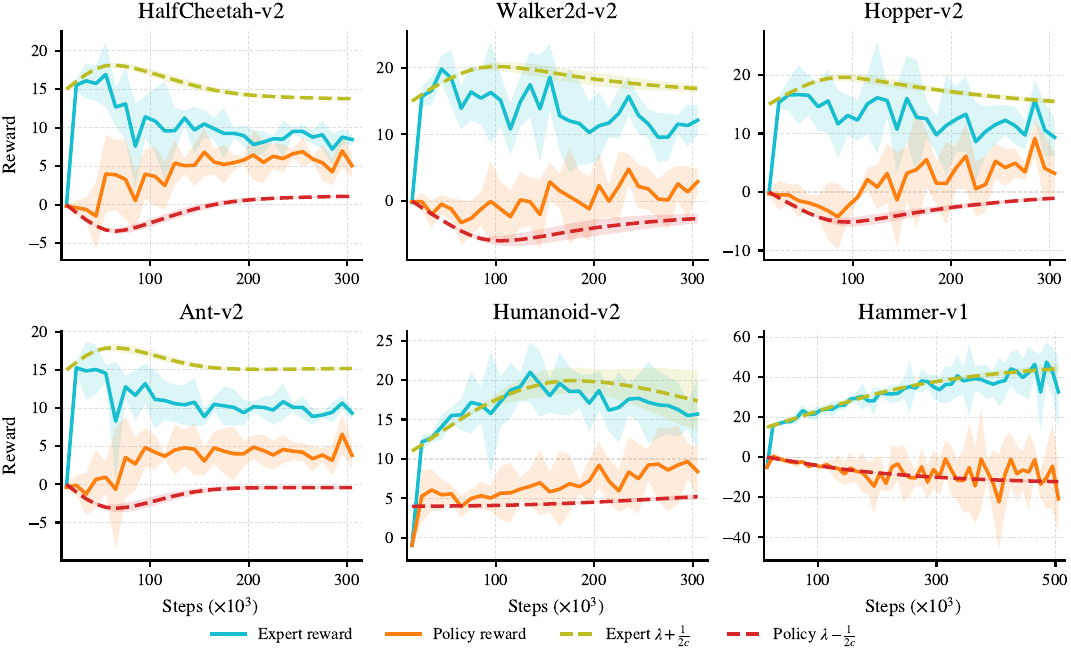}
    \caption{Implicit reward curves for expert and policy samples on MuJoCo and Adroit tasks with 10 expert demonstrations. Each subplot reports the mean across five seeds, with shaded regions showing the 95\% confidence interval. Theoretical upper and lower bounds derived in this work are overlaid as separate curves in each subplot.}
    \label{fig:rewards_lambda_bounds_demo10}
\end{figure}

\paragraph{Ablation on Critic Architecture.}
To assess the effect of modeling the return distribution, we replace the IQN-based critic $Z(s,a)$ \citep{Dabney2018} in \textsc{RIZE} with a point-estimate $Q(s,a)$ critic and compare against the original implementation. Here, $Z(s,a)$ denotes the full distribution of discounted returns whose expectation yields $Q(s,a)$. As shown in Figure~\ref{fig:rize_ablation_q_demo3}, the IQN critic consistently outperforms the $Q$-network across all tasks, exhibiting lower variance and greater sample efficiency. Notably, on \texttt{Humanoid-v2} and \texttt{Hammer-v1}, the $Q$-network fails to match expert performance, whereas the IQN critic maintains expert-level returns.

See Figure~\ref{fig:Q_rewards_lambda_bounds_demo3} for reward trajectories with theoretical bounds when using a $Q$-network in place of the IQN critic, and Figures~\ref{fig:q_mujoco_demo10}, \ref{fig:q_mujoco_demo3}, and \ref{fig:q_hammer} for critic value–estimation curves on MuJoCo and Adroit tasks.
\begin{figure*}[t]
    \centering
    \includegraphics[width=0.8\textwidth]{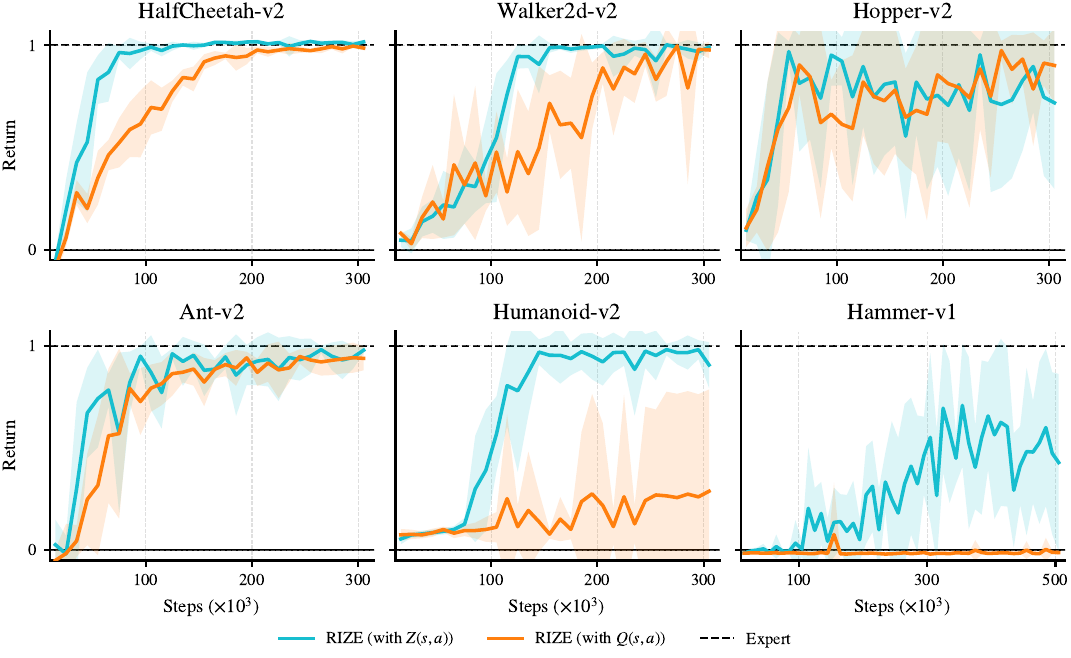}
    \caption{Ablation on critic architecture: \(Z(s,a)\) via Implicit Quantile Networks (IQN) \citep{Dabney2018} versus classic \(Q(s,a)\). We report expert–normalized returns across all MuJoCo and Adroit tasks using three expert demonstrations; metrics show the mean over five seeds with 95\% confidence intervals.}
    \label{fig:rize_ablation_q_demo3}
\end{figure*}

\section{Conclusion}

We propose a novel IRL framework that overcomes the limitations of fixed reward mechanisms through dynamic reward adaptation and context-sensitive regularization. Our approach ensures bounded implicit rewards and stable value function updates, leading to robust policy optimization. By integrating distributional RL with implicit reward learning, we capture richer return dynamics while preserving theoretical guarantees. Empirical results on MuJoCo and Adroit benchmarks show expert-like proficiency on the \texttt{Humanoid-v2} and \texttt{Hammer-v1} tasks with three expert demonstrations. Ablation studies confirm the important role of our regularization mechanism. This work unifies implicit reward regularization with distributional return representations, offering a scalable and sample-efficient solution for complex decision-making. Future directions include extending this framework to offline IL and risk-sensitive robotics, where TD regularizer with adaptive targets and uncertainty-aware return distributions can further improve robustness and generalization.

\bibliography{main}
\bibliographystyle{tmlr}

\appendix

\section{Supporting Proofs}
\label{appx:supporting-proofs}

\subsection{Expectation of the Return Distribution}

\begin{lemma}
\label{lem:expected_z}
The expectation of the distributional return satisfies 
\[
\mathbb{E}[Z(s,a)] = Q(s,a).
\]
\end{lemma}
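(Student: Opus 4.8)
The plan is to reduce the claim to two elementary facts: (i) the soft $Q$-function is \emph{by definition} the expectation of the soft return, and (ii) the expectation of any integrable random variable equals the integral of its quantile function over $[0,1]$, for which Equation~(\ref{eq:expec-z}) is the Riemann-sum discretization. I would start from Equation~(\ref{eq:z}), which exhibits $Z(s,a)$ as the random variable $\sum_{t=0}^{\infty}\gamma^t[R(s_t,a_t)+\alpha\mathcal{H}(\pi(\cdot\mid s_t))]$ along trajectories with $s_0=s$, $a_0=a$, $s_{t+1}\sim P(\cdot\mid s_t,a_t)$, $a_{t+1}\sim\pi(\cdot\mid s_{t+1})$. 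Since $\gamma\in[0,1)$ and the per-step reward and policy entropy are bounded, the series converges absolutely a.s.\ and is dominated by the integrable constant $(\sup|R|+\alpha\sup_s\mathcal{H}(\pi(\cdot\mid s)))/(1-\gamma)$; dominated convergence then justifies exchanging $\mathbb{E}$ with the infinite sum, and linearity of expectation gives $\mathbb{E}[Z(s,a)]=\sum_{t=0}^{\infty}\gamma^t\mathbb{E}_\pi[R(s_t,a_t)+\alpha\mathcal{H}(\pi(\cdot\mid s_t))\mid s_0=s,a_0=a]$, which is precisely the soft state--action value $Q(s,a)$ under $\pi$. An equivalent and perhaps cleaner route: take expectations on both sides of the distributional soft Bellman equation $Z\overset{D}{=}\mathcal{B}_D^\pi Z$; this collapses to the scalar soft Bellman equation for $Q$, and since both operators are contractions with unique fixed points, $\mathbb{E}[Z]$ must equal $Q$.

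To connect with the quantile representation actually used in Equation~(\ref{eq:expec-z}), I would invoke the standard identity $\mathbb{E}[Z(s,a)] = \int_0^1 F_Z^{-1}(\tau)\,d\tau = \int_0^1 Z_\tau(s,a)\,d\tau$: if $U\sim\mathrm{Uniform}(0,1)$ then $F_Z^{-1}(U)\overset{D}{=}Z(s,a)$, so the claim is just $\mathbb{E}[Z]=\mathbb{E}[F_Z^{-1}(U)]$. Because $\tau\mapsto Z_\tau(s,a)$ is nondecreasing, the finite sum $\sum_{i=0}^{N-1}(\tau_{i+1}-\tau_i)Z_{\tau_i}(s,a)$ in Equation~(\ref{eq:expec-z}) is a left Riemann sum for this integral and converges to it as $\max_i(\tau_{i+1}-\tau_i)\to 0$ (exactly, not merely in the limit, when $Z$ is represented as the $N$-atom quantile distribution placing mass $\tau_{i+1}-\tau_i$ at level $Z_{\tau_i}$). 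Chaining the two steps yields $\mathbb{E}[Z(s,a)] = \int_0^1 Z_\tau(s,a)\,d\tau = Q(s,a)$.

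The only genuinely delicate points are the two limit interchanges — expectation versus the infinite discounted sum in the first step, and the Riemann-sum limit in the second — and both are routine under the standing boundedness assumptions on $R$ and $\mathcal{H}(\pi)$ together with $\gamma<1$. If one prefers to keep the quantile grid fixed and finite (as in the IQN implementation), the lemma should be read as the exact statement $\mathbb{E}[Z]=\int_0^1 Z_\tau\,d\tau$ with Equation~(\ref{eq:expec-z}) its convergent discretization; I would state it that way to avoid ambiguity about whether $Q$ in Equation~(\ref{eq:expec-z}) denotes the integral or a finite approximation of it.
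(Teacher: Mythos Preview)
Your core argument---take expectations in the definition of $Z(s,a)$, interchange expectation with the discounted sum, and identify the result as the soft $Q$-value---is exactly the paper's proof, though you supply the dominated-convergence justification that the paper omits. Your additional material (the Bellman-contraction alternative and the quantile-integral connection to Equation~(\ref{eq:expec-z})) goes beyond what the paper proves but is correct and arguably more useful, since the lemma is invoked precisely to justify that quantile formula.
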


\begin{proof}
From the soft return distribution definition
\begin{equation}
Z(s,a) = \sum_{t=0}^\infty \gamma^t \left[ R(s_t, a_t) + \alpha \mathcal{H}(\pi(\cdot \mid s_t)) \right] \, ,
\end{equation}
taking expectations yields:
\begin{align*}
\mathbb{E}[Z(s,a)] &= \mathbb{E}\left[ \sum_{t=0}^\infty \gamma^t \left( R(s_t, a_t) + \alpha \mathcal{H}(\pi(\cdot \mid s_t)) \right) \right] \\
&= \sum_{t=0}^\infty \gamma^t \mathbb{E}\left[ R(s_t, a_t) + \alpha \mathcal{H}(\pi(\cdot \mid s_t)) \right] \\
&= Q(s,a) \quad \text{(by soft Q-value definition)}.
\end{align*}
\end{proof}

\subsection{Convergence of the Optimal Reward}

\begin{corollary}
\label{cor:optimal-reward-convergence}
For \( \rho_\pi = \rho_E \), the optimal reward satisfies \( R_Q^*(s,a) = \lambda^{\pi_E} = \lambda^\pi \).
\end{corollary}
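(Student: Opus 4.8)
The plan is to read the result essentially off Proposition~\ref{prop:optimal-reward} and then close a short self-consistency loop using the target-stationarity conditions in Equation~(\ref{eq:lambda_objective}). The corollary is really a statement about a \emph{joint} stationary point: $R_Q$ is at the critic optimum \emph{and} $\lambda^{\pi_E},\lambda^{\pi}$ are at their own optima, all under the hypothesis $\rho_\pi = \rho_E$.

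First I would substitute $\rho_E(s,a) = \rho_\pi(s,a)$ into the optimal-reward formula~(\ref{eq:optimal-reward}). Wherever this common value is strictly positive — i.e.\ on the shared support, so the denominators are nonzero — the first summand has numerator $\rho_E - \rho_\pi = 0$ and therefore vanishes, while the second summand collapses from a convex combination to the plain average $\tfrac12\bigl(\lambda^{\pi_E} + \lambda^{\pi}\bigr)$. So at this stage one has $R_Q^*(s,a) = \tfrac12\bigl(\lambda^{\pi_E}+\lambda^{\pi}\bigr)$, in particular a constant independent of $(s,a)$ on the support.

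Next I would invoke stationarity of the target objectives~(\ref{eq:lambda_objective}). Minimizing $\mathbb{E}_{\rho_E}\bigl[(R_Q - \lambda^{\pi_E})^2\bigr]$ over the scalar $\lambda^{\pi_E}$ gives $\lambda^{\pi_E} = \mathbb{E}_{\rho_E}[R_Q]$, and likewise $\lambda^{\pi} = \mathbb{E}_{\rho_\pi}[R_Q]$. Since $\rho_E = \rho_\pi$, these two expectations are against the same measure, hence $\lambda^{\pi_E} = \lambda^{\pi}$. Combining with the previous step, $R_Q^*(s,a) = \tfrac12\bigl(\lambda^{\pi_E}+\lambda^{\pi}\bigr) = \lambda^{\pi_E} = \lambda^{\pi}$, which is the claim; and one can sanity-check consistency, since a constant $R_Q^* \equiv \lambda$ indeed satisfies $\mathbb{E}_{\rho_E}[R_Q^*] = \lambda$.

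The only genuine subtlety — and the place I would phrase things carefully — is that the argument is a fixed-point / simultaneous-first-order-conditions statement rather than a claim about the reward optimum alone, so I would make explicit that we evaluate~(\ref{eq:optimal-reward}) at the jointly stationary $\lambda$'s. A secondary caveat is the support condition: formula~(\ref{eq:optimal-reward}) and the collapse of its first term are only meaningful where $\rho_E(s,a)+\rho_\pi(s,a) > 0$, so the conclusion should be read as holding on the common support of the two occupancy measures. Everything beyond these two remarks is a one-line computation.
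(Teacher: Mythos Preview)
Your proposal is correct and follows essentially the same route as the paper: first collapse Equation~(\ref{eq:optimal-reward}) under $\rho_E=\rho_\pi$ to obtain $R_Q^*=\tfrac12(\lambda^{\pi_E}+\lambda^{\pi})$, then use the stationarity conditions for the targets in Equation~(\ref{eq:lambda_objective}) to conclude $\lambda^{\pi_E}=\lambda^{\pi}$. The only cosmetic difference is that the paper substitutes the averaged $R_Q^*$ back into the target losses and re-differentiates, whereas you invoke the mean-minimizer identity $\lambda=\mathbb{E}_\rho[R_Q]$ directly and then use $\rho_E=\rho_\pi$; your added remarks on the joint fixed-point reading and the support caveat are welcome clarifications the paper omits.
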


\begin{proof}
When \( \rho_\pi = \rho_E \), Proposition~\ref{prop:optimal-reward} simplifies to
\begin{equation} \label{eq:optimal-reward-avg-lambda}
    R_Q^*(s,a) = \frac{\lambda^{\pi_E} + \lambda^\pi}{2} \, .
\end{equation}
Substituting this expression into the loss functions for \( \lambda^{\pi_E} \) and \(\lambda^\pi\) in Equation~(\ref{eq:lambda_objective}) yields
\begin{equation}
\underset{\lambda^{\pi_E}}{\min} \; \mathbb{E}_{\rho_{E}} \left[ \frac{1}{4} \left( \lambda^{\pi} - \lambda^{\pi_E} \right)^2 \right]
\, ,\qquad 
\underset{\lambda^{\pi}}{\min} \; \mathbb{E}_{\rho_{\pi}} \left[ \frac{1}{4} \left( \lambda^{\pi_E} - \lambda^{\pi} \right)^2 \right] \, .
\end{equation}
Differentiating each objective with respect to its corresponding target and setting the derivatives to zero gives \( \lambda^\pi = \lambda^{\pi_E} \). This result implies that the optimal targets coincide at convergence. Therefore, the optimal reward reduces to
\begin{equation}
    R_Q^*(s,a) = \lambda^{\pi_E} = \lambda^\pi  \, .
\end{equation}
\end{proof}

\section{Implementation Details}
\label{appx:implementation-details}

\noindent \textbf{MuJoCo Suite.}
We evaluate \textsc{RIZE} on five Gym \citep{Brockman2016} MuJoCo \citep{Todorov2012} locomotion tasks: \texttt{HalfCheetah-v2}, \texttt{Walker2d-v2}, \texttt{Ant-v2}, \texttt{Humanoid-v2}, and \texttt{Hopper-v2}. Expert trajectories for these tasks are taken from IQ-Learn \citep{Garg2021} and were generated with Soft Actor–Critic \citep{Haarnoja2018}; each trajectory contains 1{,}000 state–action transitions. Episode returns are normalized by expert performance with the following expert evaluation returns: \texttt{HalfCheetah} (5{,}100), \texttt{Walker2d} (5{,}200), \texttt{Ant} (4{,}700), \texttt{Humanoid} (5{,}300), and \texttt{Hopper} (3{,}500).

\noindent \textbf{Adroit.}
For \texttt{Hammer-v1} from the Adroit suite \citep{rajeswaran2018dexterousmanipulation}, we use the D4RL dataset \citep{fu2021d4rldatasetsdeepdatadriven} and filter the top 100 episodes from the original 5{,}000. The resulting expert subset has an average return of 16{,}800, and each episode comprises 200 state–action pairs. We normalize returns in this domain by the average expert return of the selected subset.

\noindent \textbf{Baselines.}
We evaluate five baselines: IQ-Learn \citep{Garg2021}, LSIQ \citep{Al-Hafez2023}, SQIL \citep{Reddy2020}, CSIL \citep{Watson2023}, and Behavior Cloning (BC) \citep{Pomerleau1991}. For MuJoCo tasks, we use the authors’ original code and configurations for IQ-Learn, LSIQ, and CSIL; SQIL is implemented using the LSIQ codebase, and BC follows the CSIL implementation. For \texttt{Hammer-v1}, we use the original CSIL code and configs. For IQ-Learn on Hammer, we observe that it does not solve the task even after a small sweep over the entropy coefficient in $\{0.01, 0.03, 0.1\}$ and the loss mode in \{\texttt{value}, \texttt{v0}\}; we report $\alpha{=}0.03$ and \texttt{loss}=\texttt{value}. For LSIQ on Hammer, we search over $\alpha \in \{0.01, 0.05, 0.1\}$ and \texttt{loss} $\in \{\texttt{value}, \texttt{v0}\}$ and select $\alpha{=}0.1$, \texttt{loss}=\texttt{value} as the better-performing variant. For SQIL, we set $\alpha{=}0.2$ consistent with our other tasks.

\noindent Our architecture integrates components from \textbf{Distributional SAC (DSAC)}\footnote{\url{https://github.com/xtma/dsac}} \citep{Ma2020} and \textbf{IQ-Learn}\footnote{\url{https://github.com/Div99/IQ-Learn}} \citep{Garg2021}, with hyperparameters tuned through search and ablation studies. Key configurations for experiments involving three and ten demonstrations are summarized in Table~\ref{tab:merged_demos}. All implementation details used in our experiments are publicly available at \url{https://github.com/adibka/RIZE}.

\noindent \textbf{Distributional SAC Components.} The critic network is implemented as a three-layer multilayer perceptron (MLP) with 256 units per layer, trained using a learning rate of \(3 \times 10^{-4}\). The policy network is a four-layer MLP, also with 256 units per layer. To enhance training stability, we employ a target policy—a delayed version of the online policy—and sample next-state actions from this module. For return distribution training \(Z^{\pi}_{\phi, \tau}\), we adopt the Implicit Quantile Networks (IQN) \citep{Dabney2018} approach by sampling quantile fractions \(\tau\) uniformly from \(\mathcal{U}(0,1)\). Additionally, dual critic networks with delayed updates are used, which empirically improve training stability. We use the following settings across tasks: replay buffer size \(10^{6}\), batch size 256, 24 quantile levels, and \(10{,}000\) pretraining steps. We evaluate every \(10^4\) steps, which takes ~3.5 minutes.

\noindent \textbf{IQ-Learn Adaptations.} Key adaptations from IQ-Learn include adjustments to the regularizer coefficient \(c\) and entropy coefficient \(\alpha\). Specifically, for the regularizer coefficient \(c\), we find that \(c=0.5\) yields robust performance on the \texttt{Humanoid} task, while \(c=0.1\) works better for other tasks. For the entropy coefficient \(\alpha\), smaller values lead to more stable training. Unlike RL, where exploration is crucial, imitation learning relies less on entropy due to the availability of expert data. Across all tasks, we set initial target reward parameters as \(\lambda^{\pi_E}=10\) and \(\lambda^{\pi}=5\). Furthermore, we observe that lower learning rates for target rewards improve overall learning performance.

Previous implicit reward methods such as IQLearn, ValueDICE, and LSIQ\footnote{\url{https://github.com/robfiras/ls-iq/tree/main}} have employed distinct modifications to the loss function. In our setup, two main loss variants are defined:
\begin{itemize}
    \item \emph{value loss:}
    \begin{equation*}
    \begin{aligned}
        \mathcal{L}(\pi, Q) = \mathbb{E}_{\rho_E} [Q(s, a) - \gamma V(s')] - \mathbb{E}_{\rho} [V(s) - \gamma V(s')] - c \, \Gamma(R_{Q}, \lambda)
    \end{aligned}
    \end{equation*}
    
    \item \emph{v0 loss:}
    \begin{equation*}
    \begin{aligned}
        \mathcal{L}(\pi, Q) = \mathbb{E}_{\rho_E} [Q(s, a) - \gamma V(s')] - (1-\gamma) \mathbb{E}_{p_0} [V(s_0)] - c \, \Gamma(R_{Q}, \lambda)
    \end{aligned}
    \end{equation*}
\end{itemize}

Here, \( \rho \) is a mixture distribution, \( p_0 \) denotes the initial distribution, \( R_{Q}(s,a) \) is the implicit reward defined as \( R_{Q}(s,a) = Q(s,a) - \gamma V(s') \), the state-value function is given by \( V(s') = Q(s',a') - \alpha \log \pi(a'|s') \), and lastly, our convex regularizer is expressed as \( \Gamma(R_{Q}, \lambda) = \mathbb{E}_{\rho_E} [(R_{Q} - \lambda^{\pi_E})^2] + \mathbb{E}_{\rho_{\pi}} [(R_{Q} - \lambda^{\pi})^2] \).

The choice between \texttt{v0} or \texttt{value} loss variants depends on environment complexity: we find that for a complex task like \texttt{Humanoid-v2}, the \texttt{v0} variant demonstrates greater robustness. And, for \texttt{HalfCheetah-v2}, \texttt{Walker2d-v2}, \texttt{Hopper-v2}, \texttt{Ant-v2}, and \texttt{Hammer-v1}, the \texttt{value} variant performs better.

\begin{table*}[t]
    \centering
    \caption{Hyperparameters for 3 and 10 Demonstrations (merged where identical)}
    \label{tab:merged_demos}
    \begin{tabular}{lccccc}
        \toprule
        Environment & \(\alpha\) (3 / 10) & \(c\) & lr \(\pi\) & lr \(\lambda^{\pi_E}\) & lr \(\lambda^{\pi}\) (3 / 10) \\
        \midrule
        Ant-v2 & 0.05 / 0.10 & 0.1 & \(5\times10^{-5}\) & \(1\times10^{-4}\) & \(1\times10^{-5}\) / \(1\times10^{-4}\) \\
        HalfCheetah-v2 & 0.05 / 0.10 & 0.1 & \(5\times10^{-5}\) & \(1\times10^{-4}\) & \(1\times10^{-5}\) / \(1\times10^{-4}\) \\
        Walker2d-v2 & 0.05 / 0.10 & 0.1 & \(5\times10^{-5}\) & \(1\times10^{-4}\) & \(1\times10^{-5}\) / \(1\times10^{-4}\) \\
        Hopper-v2 & 0.20 / 0.20 & 0.1 & \(5\times10^{-5}\) & \(1\times10^{-4}\) & \(1\times10^{-4}\) / \(1\times10^{-4}\) \\
        Humanoid-v2 & 0.05 / 0.10 & 0.5 & \(1\times10^{-5}\) & \(1\times10^{-4}\) & \(5\times10^{-5}\) / \(1\times10^{-5}\) \\
        AdroitHandHammer-v1 & 0.30 / 0.30 & 0.1 & \(3\times10^{-5}\) & \(1\times10^{-4}\) & \(5\times10^{-5}\) / \(5\times10^{-5}\) \\
        \bottomrule
    \end{tabular}
\end{table*}

\section{Additional Experiments}
\label{appx:additional-experiments}

This section augments the main experimental results with additional experiments, including targeted ablations (critic architecture, regularization design, and loss choice) and a hyperparameter sensitivity analysis.

\subsection{Main Results (Extended)}

Figure~\ref{fig:main_curves_demo3} complements the main-text results by presenting learning curves for the three-demonstration regime. Learning curves mirror the 10-demo setting (cf. Figure~\ref{fig:main_curves_demo10}): \textsc{RIZE} and CSIL performs better than the baselines with more stable learning across tasks; CSIL shows strong early progress due to behavior-cloning initialization; IQ-Learn is competitive yet below \textsc{RIZE}; and \textsc{LSIQ}/\textsc{SQIL} trail behind. Task-wise, \texttt{Humanoid-v2} remains solved only by \textsc{RIZE}, while on \texttt{Hammer-v1} \textsc{RIZE} and CSIL lead. As expected with fewer demonstrations, normalized returns are lower and variance is higher, but the relative ordering of methods and the sample-efficiency patterns are consistent with the RLiable aggregates reported in the main body (Figure~\ref{fig:rliable-intervals}).
\begin{figure*}[t]
	\centering
	\includegraphics[width=0.8\linewidth]{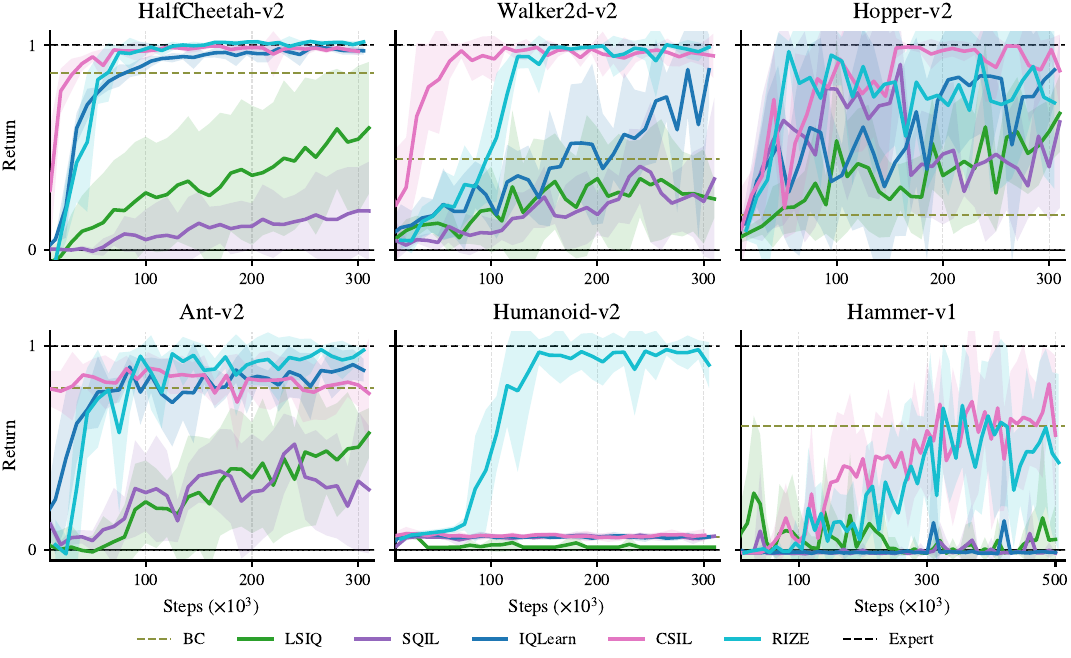}
    \caption{Learning curves on MuJoCo and Adroit tasks with 3 expert demonstrations. Lines show the mean normalized return across five seeds; shaded regions denote 95\% confidence intervals.}
	\label{fig:main_curves_demo3}
\end{figure*}

\subsection{Recovered Reward (Extended)}

We complement the main-text analysis with additional plots in the 3–demonstration regime and cross-method comparisons. First, Figure~\ref{fig:rewards_lambda_bounds_demo3} mirrors the main-body analysis for the 10–demo setting: it overlays recovered reward trajectories (expert and policy) with the theoretical bounds implied by our regularizer $\Gamma(R_{Q},\lambda)$ and adaptive targets (Corollary~\ref{cor:optimal-reward-bounds}). As in the main text, the curves remain within the predicted band, and the band’s position/width adapts by task, reflecting how the adaptive targets track and limit expert/policy rewards in practice.

Across all three figures—Figures~\ref{fig:implicit_rewards_mujoco_demo3}, \ref{fig:implicit_rewards_mujoco_demo10}, and \ref{fig:implicit_rewards_hammer}—\textsc{RIZE} and \textsc{LSIQ} keep implicit rewards bounded, whereas \textsc{SQIL} and IQ-Learn exhibit drifting or unbounded rewards on the challenging \texttt{Humanoid-v2} and \texttt{Hammer-v1} tasks. In \textsc{LSIQ}, boundedness is likely due to its target clipping; notably, expert-sample rewards concentrate near zero. By contrast, \textsc{SQIL} fails to control reward scales on nearly all tasks, and IQ-Learn is particularly unstable on \texttt{Humanoid-v2} and \texttt{Hammer-v1}. These observations support the view that adaptive, task-sensitive regularization is effective in maintaining calibrated implicit rewards.
\begin{figure}[t]
    \centering
    \includegraphics[width=0.8\textwidth]{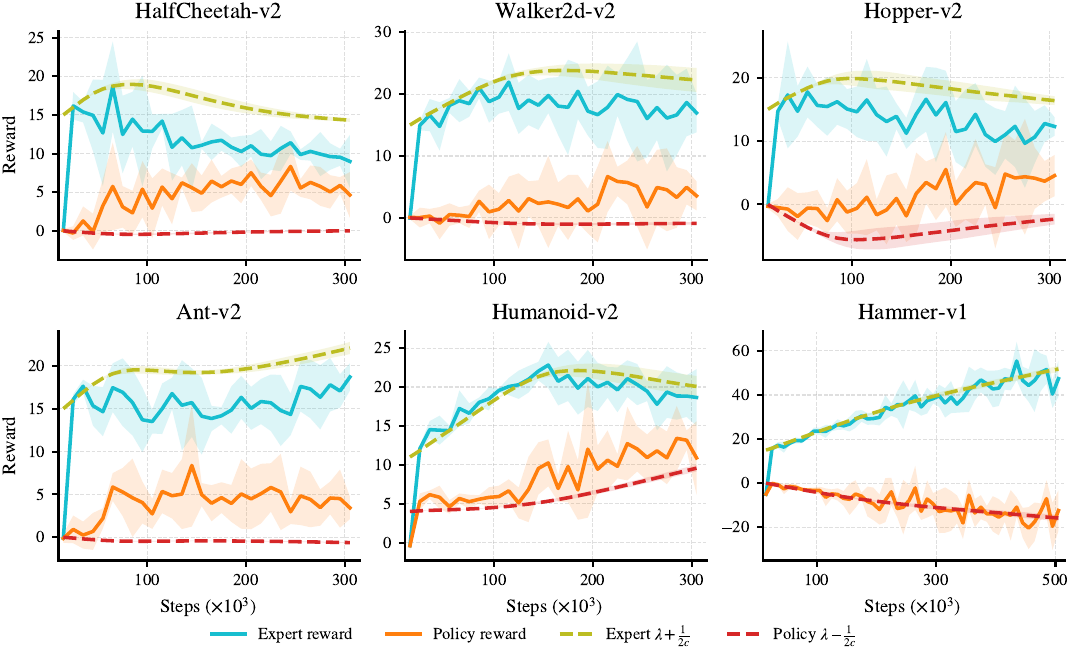}
    \caption{Implicit reward curves for expert and policy samples on MuJoCo and Adroit tasks with 3 expert demonstrations. Each subplot reports the mean across five seeds, with shaded regions showing the 95\% confidence interval. Theoretical upper and lower bounds derived in this work are overlaid as separate curves in each subplot.}
    \label{fig:rewards_lambda_bounds_demo3}
\end{figure}

\begin{figure*}[t]
	\centering
	\includegraphics[width=0.9\linewidth]{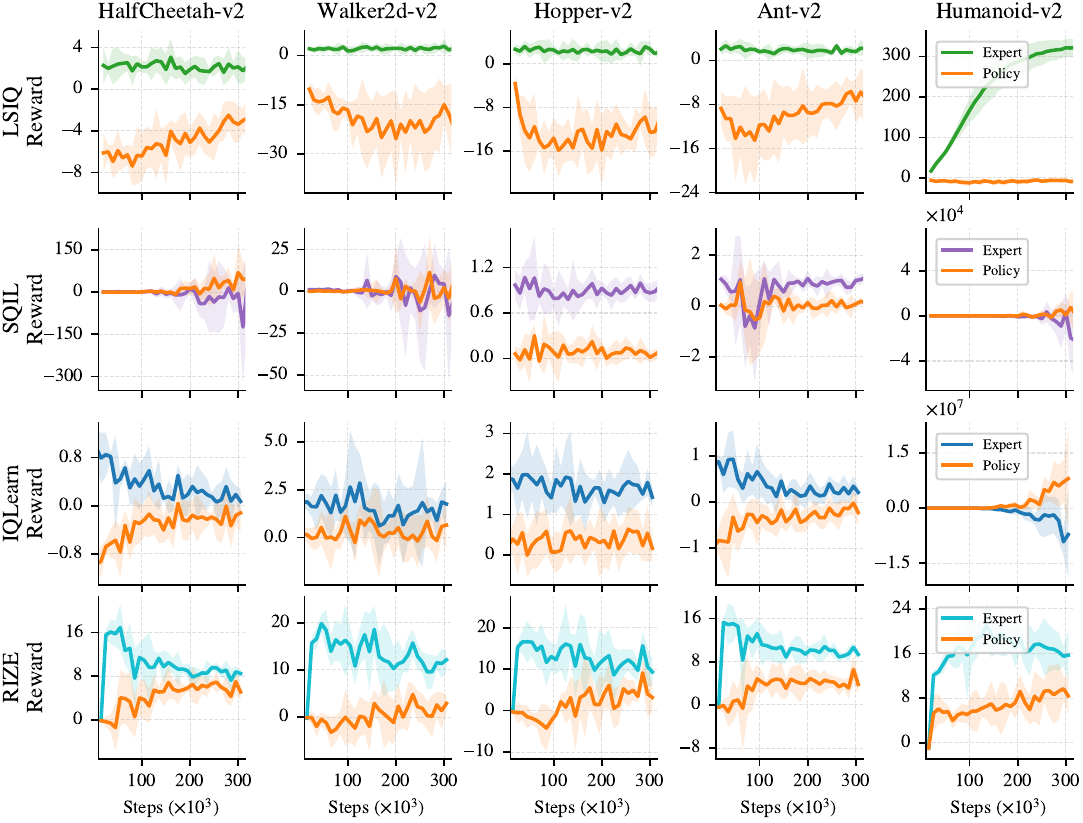}
    \caption{Implicit reward curves on MuJoCo tasks comparing \textsc{RIZE} with SQIL, IQ-Learn, and LSIQ using 10 expert demonstrations. Lines show the mean across five seeds; shaded regions denote 95\% confidence intervals.}
	\label{fig:implicit_rewards_mujoco_demo10}
\end{figure*}

\begin{figure*}[t]
	\centering
	\includegraphics[width=0.9\linewidth]{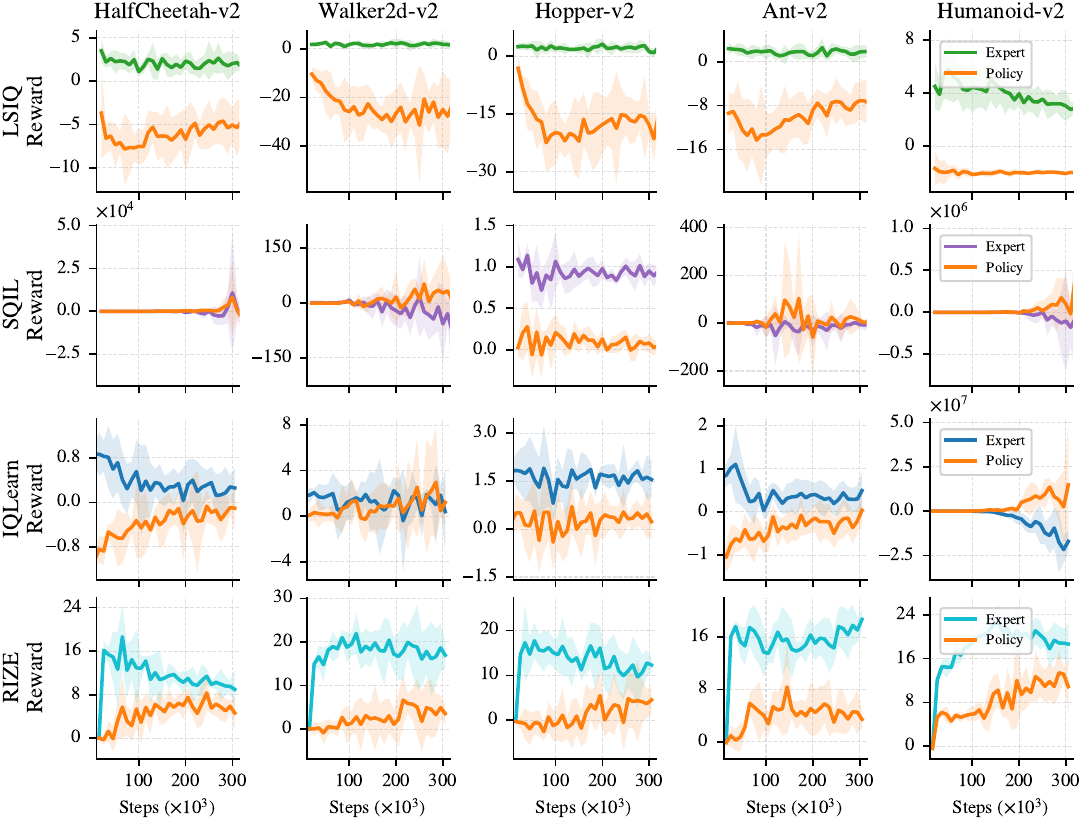}
    \caption{Implicit reward curves on MuJoCo tasks comparing \textsc{RIZE} with SQIL, IQ-Learn, and LSIQ using three expert demonstrations. Lines show the mean across five seeds; shaded regions denote 95\% confidence intervals.}
	\label{fig:implicit_rewards_mujoco_demo3}
\end{figure*}

\begin{figure}[t]
\centering
\begin{subfigure}[t]{0.45\linewidth}
    \centering
    \includegraphics[width=\linewidth]{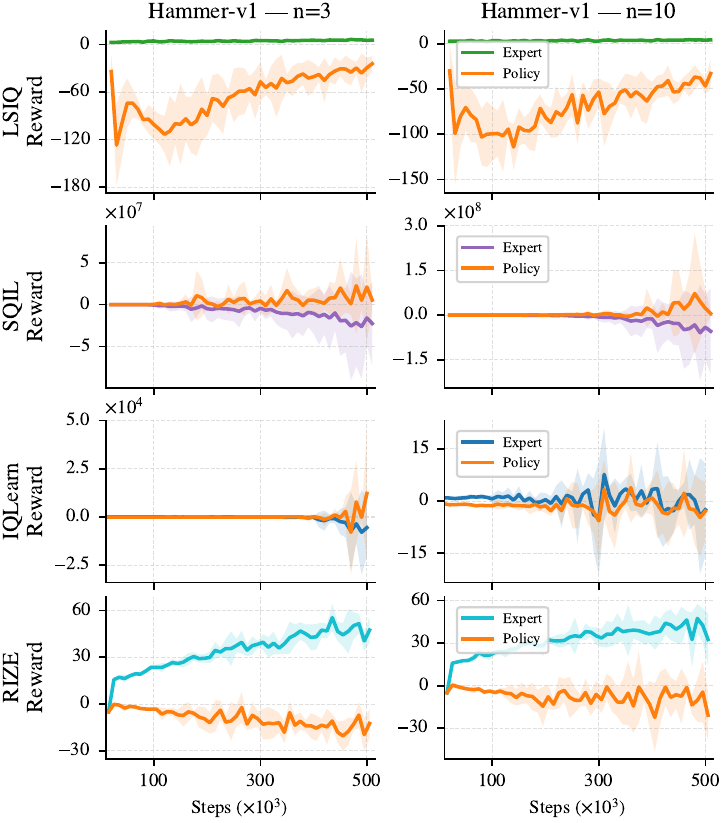}
    \caption{}
    \label{fig:implicit_rewards_hammer}
\end{subfigure}
\hspace{1em} % <-- fixed gap
\begin{subfigure}[t]{0.45\linewidth}
    \centering
    \includegraphics[width=\linewidth]{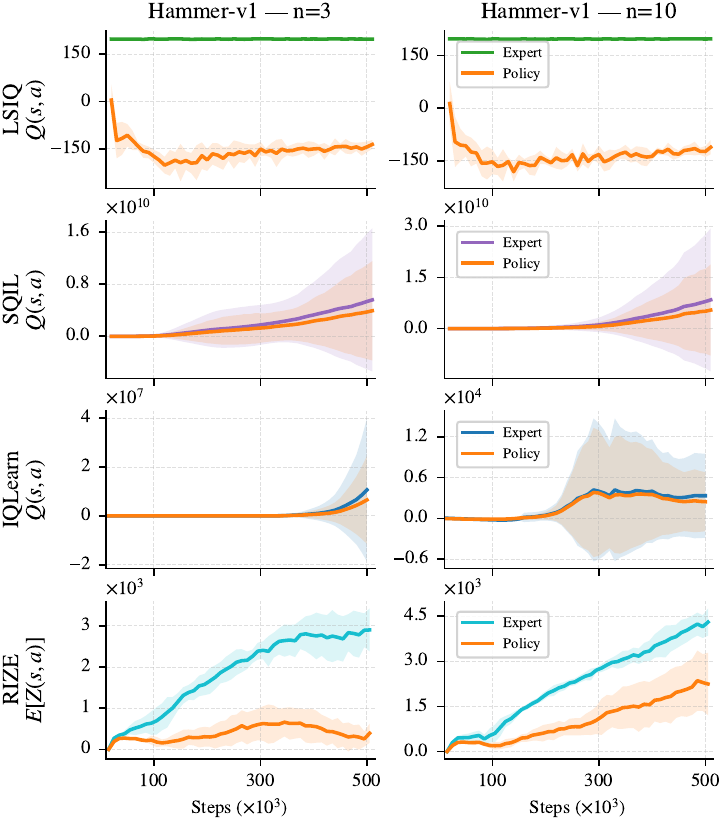}
    \caption{}
    \label{fig:q_hammer}
\end{subfigure}
\caption{Adroit Hammer-v1 results: \textbf{(a)} implicit reward curves and \textbf{(b)} value–estimation curves, each reported for 3 and 10 expert demonstrations (n denotes number of trajectories). Lines show the mean across five seeds; shaded regions denote 95\% confidence intervals.}
\label{fig:hammer-reward-and-q}
\end{figure}

\subsection{Critic Value-estimation}

We examine how critic estimates evolve in \textsc{RIZE} versus baselines. Unlike methods that train point-estimate $Q$-networks, \textsc{RIZE} employs an IQN critic that models the full return distribution $Z(s,a)$ and optimizes losses using its expectation $\mathbb{E}[Z(s,a)]$ \citep{Dabney2018}. Because all policies are updated by maximizing state--action values, unbounded estimates can destabilize training and undermine robustness.

Figures~\ref{fig:q_hammer}, \ref{fig:q_mujoco_demo10}, and \ref{fig:q_mujoco_demo3} show that \textsc{RIZE} and \textsc{LSIQ} maintain bounded values on the challenging \texttt{Humanoid-v2} and \texttt{Hammer-v1} tasks, whereas \textsc{SQIL} and IQ-Learn exhibit large, drifting estimates. For \textsc{LSIQ}, boundedness primarily arises from target clipping to $[-200,200]$, which also explains why expert-sample values cluster near $+200$. Importantly, bounded critics alone do not guarantee expert-level control within our training budgets (3e5 steps for MuJoCo; 5e5 for Adroit): \textsc{LSIQ} remains below \textsc{RIZE} in Figures~\ref{fig:main_curves_demo10} and~\ref{fig:main_curves_demo3}, indicating that value stability achieved by adaptive reward regularization can translate into better performance than rigid critic clipping with fixed reward targets.
\begin{figure*}[t]
    \centering
    \includegraphics[width=0.9\linewidth]{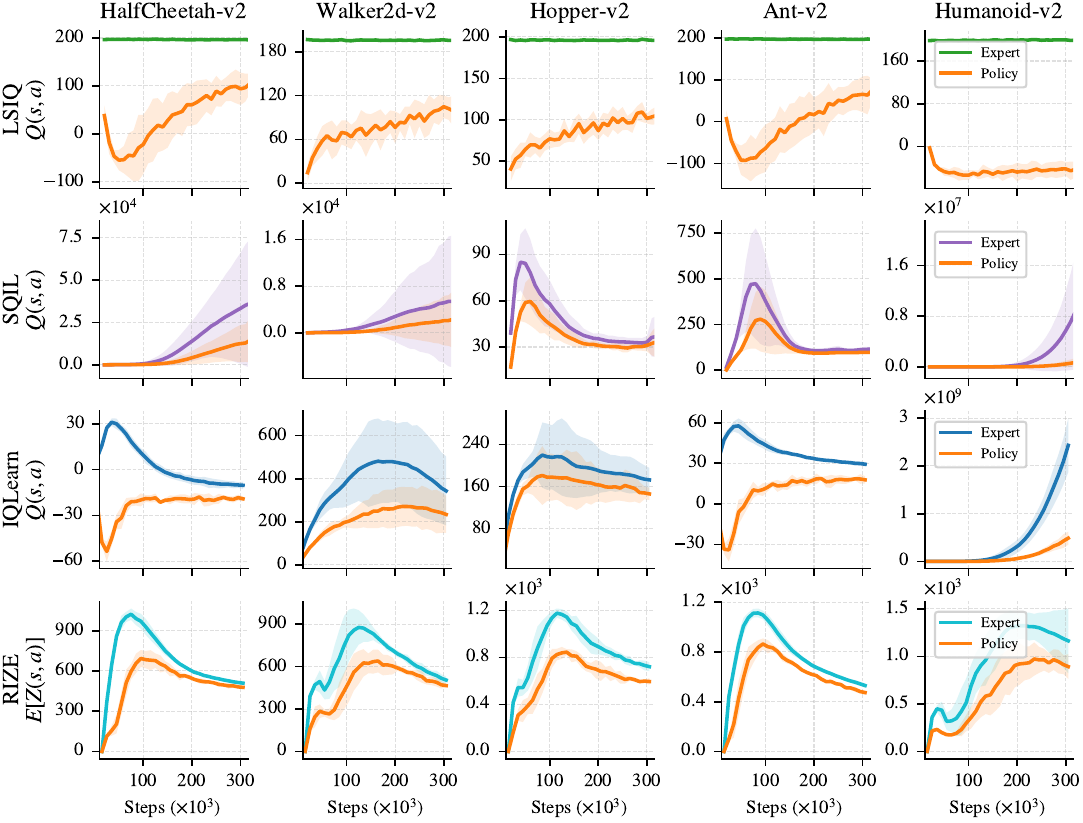}
    \caption{Value-estimation curves on MuJoCo tasks comparing \textsc{RIZE} (IQN critic \citep{Dabney2018}) with \textsc{SQIL}, IQ-Learn, and \textsc{LSIQ} (point-estimate $Q$) using 10 expert demonstrations. Lines are means over five seeds; shaded regions denote 95\% confidence intervals.}
    \label{fig:q_mujoco_demo10}
\end{figure*}

\begin{figure*}[t]
    \centering
    \includegraphics[width=0.9\linewidth]{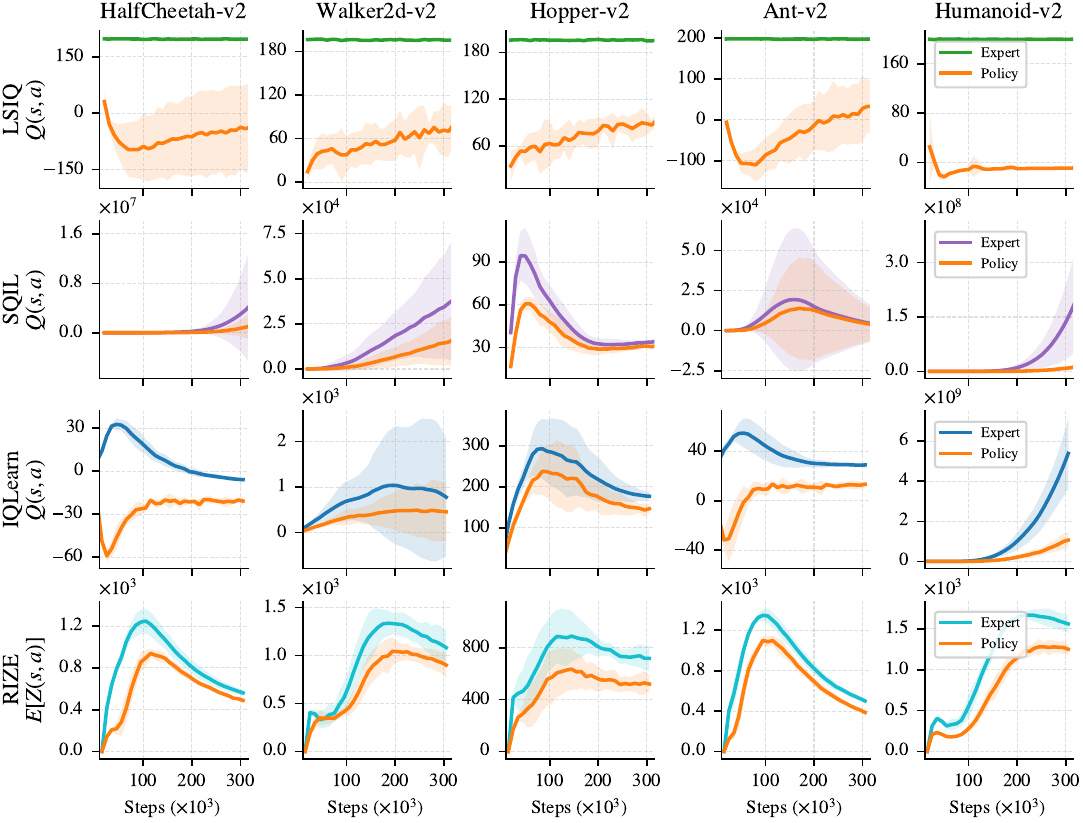}
    \caption{Value-estimation curves on MuJoCo tasks comparing \textsc{RIZE} (IQN critic \citep{Dabney2018}) with \textsc{SQIL}, IQ-Learn, and \textsc{LSIQ} (point-estimate $Q$) using three expert demonstrations. Lines are means over five seeds; shaded regions denote 95\% confidence intervals.}
    \label{fig:q_mujoco_demo3}
\end{figure*}

\subsection{Ablation on Critic Architecture (Extended)}

Figure~\ref{fig:Q_rewards_lambda_bounds_demo3} complements the main-body ablation by examining \emph{recovered rewards with theoretical bounds} when \textsc{RIZE} uses a point-estimate $Q(s,a)$ critic in place of the IQN return-distribution critic $Z(s,a)$ (cf.\ Figure~\ref{fig:rize_ablation_q_demo3} for returns, and Figure~\ref{fig:rewards_lambda_bounds_demo3} for bounded rewards under IQN). With the $Q$ critic, implicit rewards are less well constrained in \texttt{Humanoid-v2} and \texttt{Hammer-v1} environments: we observe more excursions beyond the predicted band and larger oscillations, in contrast to the IQN setting where rewards remain tightly within the adaptive interval. This comparison suggests that both components contribute to reward stability: the squared-TD regularizer with adaptive targets provides principled bounds, and modeling the return distribution with IQN supplies steadier targets/updates that help keep rewards within those bounds. Together, these effects yield more reliable policy learning and align with the performance gap observed in the return curves.
\begin{figure*}[t]
	\centering
	\includegraphics[width=0.8\linewidth]{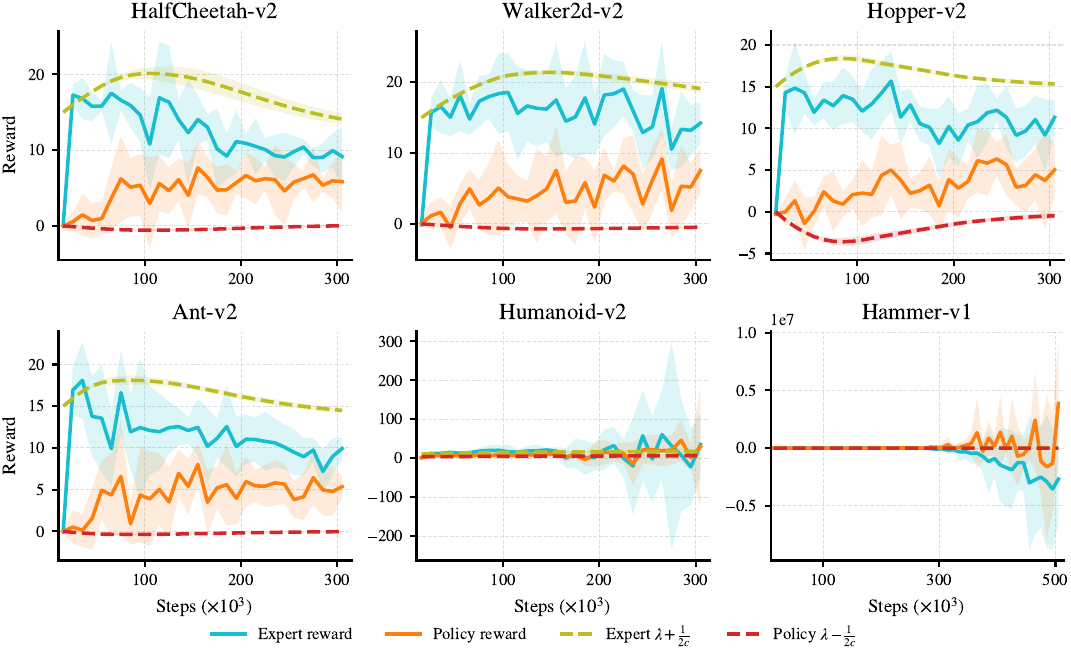}
    \caption{Implicit reward curves for expert and policy samples on MuJoCo and Adroit tasks using \textsc{RIZE} with a classic \(Q(s,a)\) critic. Each subplot shows the mean over five seeds with shaded 95\% confidence intervals, using three expert demonstrations. Theoretical upper and lower bounds derived in this work are overlaid as separate curves in each subplot.}
	\label{fig:Q_rewards_lambda_bounds_demo3}
\end{figure*}

\subsection{Ablation on Loss Choice (HL-Gauss in \textsc{LSIQ})}

Recent work argues that some of the gains attributed to distributional RL may stem from the \emph{loss} rather than from modeling the return distribution itself \citep{farebrother2024stop,pmlr-v235-ayoub24a}. Following this ``loss swapping'' view, we replace the mean-squared error (MSE) used in \textsc{LSIQ}'s critic with the HL-Gauss classification loss \citep{farebrother2024stop}, which discretizes the value range into bins and trains with a Gaussian-smoothed target over bins (turning value regression into calibrated classification).

\paragraph{Setup.}
Among our baselines, \textsc{LSIQ} (and \textsc{SQIL}) are natural candidates because their critics use MSE-like objectives, whereas IQ-Learn and \textsc{RIZE} optimize MaxEnt-style IRL objectives. We therefore apply the swap to \textsc{LSIQ} and keep all other components unchanged. Following \textsc{LSIQ}'s critic clipping in $[-200,200]$, we set $v_{\min}=-200$, $v_{\max}=200$, use \texttt{num\_bins}=101, and Gaussian width $\sigma=8$. We evaluate on \texttt{Walker2d-v2}, \texttt{Ant-v2}, and \texttt{Hammer-v1} with three demonstrations.

\paragraph{Results.}
As shown in Figure~\ref{fig:lsiq_ablation_hlgauss_demo3}, HL-Gauss underperforms the original MSE-based \textsc{LSIQ} on all three tasks: it yields no improvement on \texttt{Walker2d-v2} or \texttt{Ant-v2}, and remains near zero on \texttt{Hammer-v1}. A likely cause is structural: \textsc{LSIQ}'s critic objective is the sum of two squared-error terms (expert and policy), which is integral to its least-squares/mixture formulation; swapping those squared losses for a categorical (multi-bin) classification surrogate alters the optimization geometry and weakens the intended expert--policy mixture shaping. In contrast, standard Q-learning---where loss swaps have shown benefits---regresses to a single TD target, making the classification surrogate a closer drop-in for MSE.
\begin{figure*}[t]
    \centering
    \includegraphics[width=0.8\linewidth]{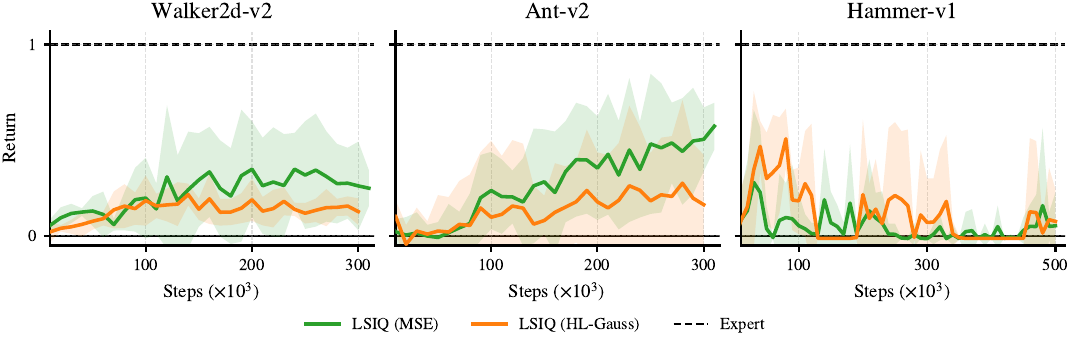}
    \caption{Loss swap in \textsc{LSIQ}: MSE vs.\ HL-Gauss on \texttt{Walker2d-v2}, \texttt{Ant-v2}, and \texttt{Hammer-v1} with three demonstrations. We keep $v_{\min}=-200$, $v_{\max}=200$, \texttt{num\_bins}=101, and $\sigma=8$. HL-Gauss does not improve over MSE and remains near zero on \texttt{Hammer-v1}.}
    \label{fig:lsiq_ablation_hlgauss_demo3}
\end{figure*}

\subsection{Ablations on Regularization Strategies}

We study how the regularizer design affects performance on \texttt{Walker2d-v2}, \texttt{Ant-v2}, and \texttt{Hammer-v1} with three demonstrations. Our baseline uses a squared TD-error regularizer $\Gamma$ in Equation~(\ref{eq:regularizer}) with separately optimized adaptive targets for expert and policy samples. We compare this to two alternatives: (i) a \emph{coupled} target, where we replace the separate targets with a single shared $\lambda$ (initialized to $0$ or $10$), and (ii) a plain L2-regularizer on rewards (no targets), akin to IQ-Learn’s constraint.

Figure~\ref{fig:rize_ablation_lmb_demo3} shows that squared TD with \emph{separate, adaptive} targets yields the most robust and highest returns across tasks. Using a coupled target is brittle: $\lambda{=}10$ attains expert-like performance on \texttt{Ant-v2}, remains below the original result on \texttt{Walker2d-v2}, and fails on \texttt{Hammer-v1}, while $\lambda{=}0$ collapses across all tasks. Replacing squared TD with a plain L2 penalty also fails on all three tasks in our setup. We attribute this difference from IQ-Learn’s reports to implementation and hyperparameter choices tailored to our setting: \textsc{RIZE} employs an IQN critic, target policy networks, different learning rates (notably for the policy), a higher entropy coefficient (0.05 vs.\ 0.01), and a smaller regularizer coefficient (0.1 vs.\ 0.5). These choices were tuned for squared TD with adaptive targets; swapping to an L2 penalty disrupts learning. Overall, the results indicate that \emph{adaptive, decoupled targets} are crucial for stabilizing reward across tasks.
\begin{figure*}[t]
    \centering
    \includegraphics[width=0.8\textwidth]{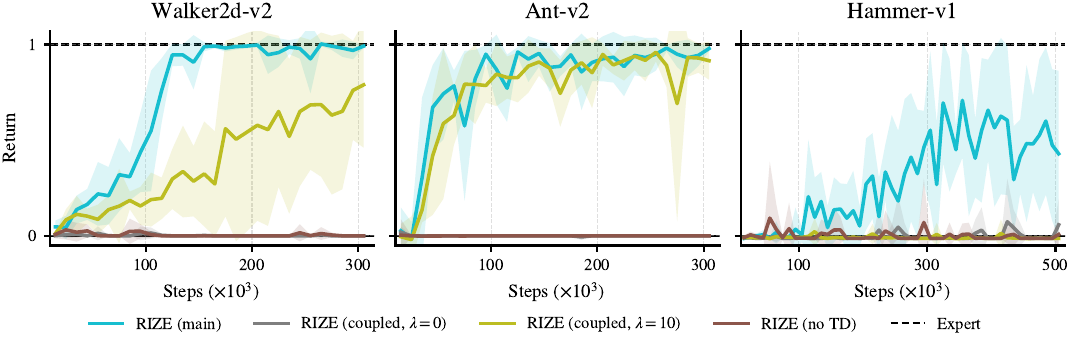}
    \caption{Ablation of regularization strategies on selected tasks (Walker2d, Ant, Hammer) using three expert demonstrations. Results are expert-normalized and reported as the mean over five seeds with 95\% confidence intervals.}
    \label{fig:rize_ablation_lmb_demo3}
\end{figure*}

\subsection{Hyperparameter Tuning}
\label{subsec:hyperparameter-tuning}

We present our analysis and comparison of important hyperparameters utilized in our algorithm. Plots depict mean over five seeds with 95\% confidence intervals.

\paragraph{Adaptive Targets.}
Selecting appropriate initial values and learning rates for the automatic fine-tuning of \(\lambda^{\pi_E}\) and \(\lambda^{\pi}\) is critical in our approach. First, we observe that a suitable learning rate is essential for the stable training of our imitation learning agent, as illustrated in Figure~\ref{fig:lambda-lr}. Our findings indicate that \(\lambda^{\pi}\) must be optimized very slowly; using larger learning rates can destabilize training and hinder progress. In contrast, \(\lambda^{\pi_E}\) demonstrates greater resilience when optimized with higher learning rates. Additionally, \(\lambda^{\pi_E}\) remains robust even with varying initial values. However, as shown in Figure~\ref{fig:lambda-values}, failing to select an appropriate initial value for \(\lambda^{\pi}\) can negatively impact learning. Overall, Figures~\ref{fig:lambda-lr} and~\ref{fig:lambda-values} highlight the need for careful selection of both the learning rate and initial value when optimizing \(\lambda^{\pi}\), while \(\lambda^{\pi_E}\) exhibits considerable robustness in this regard.
\begin{figure}[t]
\centering
\begin{subfigure}[t]{0.4\linewidth}
    \centering
    \includegraphics[width=\linewidth]{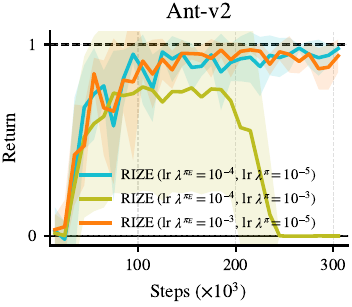}
    \caption{}
    \label{fig:lambda-lr}
\end{subfigure}
\hspace{1em} % <-- fixed gap
\begin{subfigure}[t]{0.4\linewidth}
    \centering
    \includegraphics[width=\linewidth]{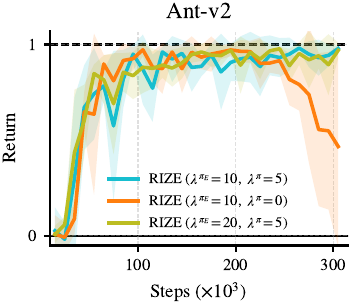}
    \caption{}
    \label{fig:lambda-values}
\end{subfigure}
\caption{Fine-tuning analysis of adaptive targets. \textbf{(a)} Learning rates: Turquoise represents our method's primary result with learning rates of \(1\mathrm{e}{-4}\) for \(\lambda^{\pi_E}\) and \(1\mathrm{e}{-5}\) for \(\lambda^{\pi}\). Orange and blue lines indicate higher learning rates (e.g., \(1\mathrm{e}{-3}\)) for \(\lambda^{\pi_E}\) and \(\lambda^{\pi}\), respectively. \textbf{(b)} Starting values: Turquoise shows the main result with initial values of 10 for \(\lambda^{\pi_E}\) and 5 for \(\lambda^{\pi}\), while other lines explore different starting values. Three trajectories are used throughout the analysis.}
\label{fig:finetune-lambda}
\end{figure}

\paragraph{Regularization Coefficient.}
Our experiments on the regularizer coefficient \(c\) reveal that smaller values of \(c\) encourage expert-like performance, while larger values overly constrain rewards and targets, limiting learning. This finding highlights the critical role of selecting an appropriate \(c\), as it directly impacts the balance between learning from expert data and regularization: higher values prioritize regularization at the cost of learning, whereas smaller values favor learning but reduce regularization (see Figure~\ref{fig:c}).

\paragraph{Entropy Coefficient.}
We observe that the entropy coefficient is a crucial hyperparameter in inverse reinforcement learning (IRL) problems. As shown in Figure~\ref{fig:alpha}, IRL methods typically require small values for \(\alpha\), a point previously noted \citep{Garg2021}. With expert demonstrations available, an imitation learning (IL) policy does not need to explore for optimal actions, as these are provided by the demonstrations. Consequently, higher values of \(\alpha\) can lead to training instability, ultimately resulting in policy collapse.
\begin{figure}[t]
\centering
\begin{subfigure}[t]{0.4\linewidth}
    \centering
    \includegraphics[width=\linewidth]{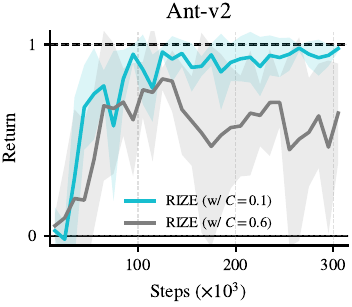}
    \caption{}
    \label{fig:c}
\end{subfigure}
\hspace{1em} % <-- fixed gap
\begin{subfigure}[t]{0.4\linewidth}
    \centering
    \includegraphics[width=\linewidth]{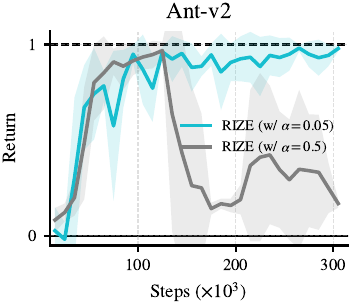}
    \caption{}
    \label{fig:alpha}
\end{subfigure}
\caption{\textbf{(a)} Effect of the regularizer coefficient \(c\). Turquoise shows the primary result of our method with \(c = 0.1\), while gray represents a larger value (\(c = 0.6\)). \textbf{(b)} Effect of the temperature parameter \(\alpha\). Turquoise shows the result with \(\alpha = 0.05\), and gray corresponds to a larger value (\(\alpha = 0.5\)). Three trajectories are used throughout the analysis.}
\label{fig:finetune-c-alpha}
\end{figure}

\end{document}